\newtheorem{thm}{Theorem}
\newtheorem{assume}[thm]{Assumption}
\DeclareMathOperator*{\loss}{loss}
\DeclareMathOperator*{\premium}{premium}
\DeclareMathOperator*{\nrmse}{nrmse}
\DeclareMathOperator*{\mean}{mean}
\newcommand{\Brack}[1]{\left( #1 \right)}
\begin{document}

\title{Fairness in Forecasting of Observations of \\Linear Dynamical Systems}

\author{\noindent 
       \name Quan Zhou \email q.zhou22@imperial.ac.uk \\
       \addr Dyson School of Design Engineering, \\Imperial College London, \\London, SW7 9EG, United Kingdom\\
       \addr School of Electrical and Electronic Engineering, \\University College Dublin, \\Dublin, D04 V1W8, Ireland
       \AND
       \name Jakub Mare\v{c}ek \email jakub.marecek@fel.cvut.cz \\
       \addr Department of Computer Science, \\Czech Technical University in Prague, \\Prague, 121 35, the Czech Republic
       \AND
       \name Robert Shorten \email r.shorten@imperial.ac.uk\\
       \addr Dyson School of Design Engineering, \\Imperial College London, \\London, SW7 9EG, United Kingdom\\
       \addr School of Electrical and Electronic Engineering, \\University College Dublin,\\ Dublin, D04 V1W8, Ireland}

\maketitle

\begin{abstract}
In machine learning, training data often capture the behaviour of multiple subgroups of some underlying human population.
This behaviour can often be modelled as observations of an unknown dynamical system with an unobserved state.
When the training data for the subgroups are not controlled carefully,  however, under-representation bias arises.
To counter under-representation bias, we introduce two natural notions of fairness in time-series forecasting problems: subgroup fairness and instantaneous fairness.
These notions extend predictive parity to the learning of dynamical systems.
We also show globally convergent methods for the fairness-constrained learning problems using hierarchies of convexifications of non-commutative polynomial optimisation problems. 
We also show that by exploiting sparsity in the convexifications, we can reduce the run time of our methods considerably.
Our empirical results on a biased data set motivated by insurance applications and the well-known COMPAS data set demonstrate the efficacy of our methods.
\end{abstract}
\section{Introduction}\label{sec1}

Forecasts affect almost all aspects of our daily life, as a basis for access-control mechanisms. 
As the quality of the forecasts impacts our lives, for better or worse, it is becoming more and more apparent that many of the tools that produce the forecasts seem to be, or indeed are, unfair, in a sense we formalise below.
If the tools used to produce the forecasts are unfair, society suffers.

One such example of a forecasting tool that shapes the very pillars of our society is the \emph{FICO Score} \cite{FICO}. FICO is a measure of an individual's creditworthiness (or conversely, credit-default risk), computed by the Fair Isaac Corporation.
It has been suggested \cite{nickerson2016asset,delis2019mortgage,Hegarty9152,vogel2021learning} that the FICO Score may be unfair to certain minorities, although this has been disputed \cite{avery2012does}.

As another example, consider college admissions, where results of standardised tests were often presented as forecasts of potential academic success.
In the context of COVID-19, forecasting algorithms utilising previous grades and input from teachers replaced standardised tests in determining the satisfaction of college-admission requirements in many jurisdictions. 

Other forecasting tools are perhaps less well known, but perhaps even more alarming, as they have the potential to shape the very core of society. One striking example is Northpointe's {\em Correctional Offender Management Profiling for Alternative Sanctions} (COMPAS). COMPAS is a criminal-risk assessment tool that is widely used in pretrial, parole, and sentencing decisions at courts in New York, Wisconsin, California, and Florida.
COMPAS forecasts the likelihood that an individual will re-offend within two years. 
It has been suggested \cite{angwin2016machine,dressel2021dangers} that COMPAS under-predicts recidivism for Caucasian defendants, and over-predicts recidivism for African-American defendants.\footnote{
We note that this has been disputed \cite{kleinberg2017inherent,dressel2021dangers}, and that it has been suggested that such forecasts \cite{Neile2107020118} are difficult to make, in general, due to the cohort differences in group-based arrest trajectories.}

The applications, where fairness seems most important, often capture the behaviour of multiple subgroups of some underlying human population in the training data.
Let us consider a model, where there are a number of individuals within a population $\mathcal{P}$. 
The population $\mathcal{P}$ is partitioned into subgroups indexed by $\mathcal{S}$. 
For each subgroup $s\in\mathcal{S}$, there is a set $\mathcal{I}^{(s)}$ of trajectories of observations available and each trajectory $i \in \mathcal{I}^{(s)}$ has observations for periods $\mathcal{T}^{(i,s)}$, possibly of varying cardinality $\lvert\mathcal{T}^{(i,s)}\rvert$.
Each subgroup $s \in\mathcal{S}$ is associated with a model, $\mathcal{L}^{(s)}$.
For all $i \in \mathcal{I}^{(s)}$, $s\in\mathcal{S}$, the trajectory $\{Y_t\}^{(i,s)}$, for $t\in \mathcal{T}^{(i,s)}$, is hence generated by precisely one model $\mathcal{L}^{(s)}$. 
Throughout, the superscripts distinguish the trajectories and subgroups, while subscripts indicate the periods.    
    

    

In this setting, under-representation bias \cite[cf. Section 2.2]{blum2019recovering} arises, where the trajectories of observations from one (``disadvantaged'') subgroup are under-represented in the training data.
This is particularly important if the forecasting is constrained to be subgroup-blind, i.e., 
we wish to learn a single subgroup-blind model $\mathcal{L}$.
This is the case when the use of protected attributes distinguishing each subgroup can be 
regarded as discriminatory, such as in the case of gender and race \cite{gajane2017formalizing,kleinberg2018discrimination}. 
Notice that such anti-discrimination measures are increasingly stipulated legally, e.g., within insurance pricing, where the sex of the applicant cannot be used, despite being known.
More broadly, under-representation bias harms both the accuracy of the forecast and fairness in the sense of varying accuracy across the subgroups.

%



To address under-representation bias in the training of a forecasting model, it is natural to seek a notion of fairness 
that captures the overall behaviour across all subgroups, 
while taking into account the varying amounts of training data for the individual subgroups. 
To formalise this, suppose that we learn one model $\mathcal{L}$ from the multiple trajectories and define a loss function that measures the loss of accuracy for a certain observation $Y_t^{(i,s)}$ when adopting the forecast $f_t$ for the overall population. For $t\in \mathcal{T}^{(i,s)}$, $i \in \mathcal{I}^{(s)}$, $s\in\mathcal{S}$, we have
\begin{equation}
    \loss^{(i,s)}(f_t):=||Y_t^{(i,s)}-f_t||. \label{equ:loss}
\end{equation}
Let $\mathcal{T}^+=\cup_{i\in\mathcal{I}^{(s)},s\in\mathcal{S}}\mathcal{T}^{(i,s)}$. Following the definitions of $\mathcal{T}^{(i,s)}$, there is not an observation for $t\notin \mathcal{T}^+$, in which case, loss in Equation~\eqref{equ:loss} does not exist.
To evaluate the performance of the forecasts, we only consider $f_t$ made in periods $t\in\mathcal{T}^+$. 
Note that, since each trajectory is of varying length, it is possible that for a certain triple $(t,i,s)$, there is no observation $Y_t^{(i,s)}$.


Following much recent work on fairness in classification, e.g., \cite{zliobaite2015relation,hardt2016equality,kilbertus2017avoiding,kusner2017counterfactual,chouldechova2020snapshot,aghaei2019learning},
we propose two  objectives to address the under-representation bias, which extend group fairness \cite{feldman2015certifying} to time series:

\begin{enumerate}
    \item \textbf{Subgroup Fairness}. 
    The objective seeks to equalise, across all  
    subgroups, the sum of losses for the subgroup.
    Considering the number of trajectories in each subgroup and the 
    number of observations across the trajectories may differ, 
    we include $\lvert\mathcal{I}^{(s)}\rvert,\lvert\mathcal{T}^{(i,s)}\rvert$ as weights:
    \begin{equation}
        \min_{f_t,t\in\mathcal{T}^+} \max_{s\in\mathcal{S}} \left \{
        \frac{1}{\lvert\mathcal{I}^{(s)}\rvert}
        \sum_{i \in \mathcal{I}^{(s)}} \frac{1}{\lvert\mathcal{T}^{(i,s)}\rvert}
        \sum_{t\in \mathcal{T}^{(i,s)}} \loss^{(i,s)}(f_t) \right \} \label{equ:obj-Subgroup-Fair}
    \end{equation}
    \item \textbf{Instantaneous Fairness}. The objective seeks to equalise 
    the instantaneous loss, by minimising the 
    maximum of the losses across all subgroups and all times: 
    \begin{equation}
        \min_{f_t,t\in\mathcal{T}^+}
        \left \{ 
        \max_{t\in\mathcal{T}^{(i,s)},i\in\mathcal{I}^{(s)},s\in\mathcal{S}} \left \{ \loss^{(i,s)}(f_t) \right \} \right \} \label{equ:obj-Instant-Fair}
    \end{equation}

\end{enumerate}



\paragraph{Specific Contributions: }This paper builds on our preliminary work that was presented in AAAI 2021 \cite{zhou2021fairness}. The present manuscript extends this work by:
\begin{itemize}
    \item We have extended our fairness notions presented at \cite{zhou2021fairness} to more generally applicable post-processing methods.
    \item We have added a comparison of our fairness notions with all post-processing methods in the AI Fairness 360 library, which are based on previous fairness notions of ``calibrated equalised odds'', ``equalised odds'' and ``demographic parity'', respectively.
\end{itemize}
With respect to the state of the art, this paper defines two new notations of fairness.
We then cast the learning of a linear dynamical system with such fairness considerations as a non-commutative polynomial optimisation problem (NCPOP), which can be solved efficiently using a globally-convergent hierarchy of semidefinite programming (SDP) relaxations, which can be of independent interest.
A comprehensive comparison is given to illustrate the efficacy of our approach.

\section{Definitions and Related Work}


Recent years have seen an unprecedented explosion in attention of notions of fairness in the field of artificial intelligence and machine learning \cite{ntoutsi2020bias,chouldechova2020snapshot}. 
In a typical machine learning process, the training set usually contains individuals' protected attributes (e.g., race, gender), remaining attributes, and a target variable $Y$.
Other than not using the protected attributes (``fairness under unawareness''), several candidate definitions of fairness have been proposed and we would start from statistical notions of fairness, which could be roughly categorised into three types \cite{barocas2017fairness}: (i) independence; (ii) separation; and (iii) sufficiency; a more fine-grained discussion of fairness is presented in the sequel.
The independence notion typically asks the output to be independent of protected attributes. A simple example is the ``demographic parity'' \cite{calder2009experimental}, which requires each segment of a protected class (e.g., defined by gender) to receive the positive outcome at equal rates.
The notion of separation, e.g., ``equal odds'' or ``equal opportunity'' in \cite{hardt2016equality}, requires the predictor's output to be unrelated to protected attributes, but conditional on the target variables $Y$.
Finally, the notion of sufficiency, derived from calibration, asked the target variables be independent from protected attributes conditional on the predictor output. 
For instance, we would expect the portion of defendants who were predicted to re-offend by the COMPAS system and actually re-offend to be equalised across subgroups.
Calibration would require that for any given COMPAS score, the recidivism rates are similar.

The notions of independence, separation, and sufficiency are all related to subgroups of the population and provide an average guarantee for individuals in the protected group \cite{awasthi2020beyond}. 
In contrast, the notion of individual fairness asks for constraints that bind on specific pairs of individuals, rather than on a quantity that is averaged over groups \cite{chouldechova2020snapshot}. In other words, it requires ``similar individuals should be treated similarly'' \cite{petersen2021post,dwork2012fairness}.
However, this notion requires a similarity metric capturing the ground truth, which requires general and task-specific assumption on its definition \cite{sharifi2019average}.
Apart from notions based on correlations of statistical measures and fairness, the notion of counterfactual fairness, pioneered by \cite{kusner2017counterfactual}, operates at the individual level such that causal methods are used to examine whether a decision is the same as in situations whether an individual's protected attributes are altered or not.
Its generalised variant path-specific fairness in \cite{nilforoshan2022causal,chiappa2019path} specifies the effects of protected attributes along certain path in a causal directed acyclic graph.
The notion of ``procedural fairness'', or in other words, the fairness of the decision-making process,
especially in processes that resolve disputes and allocate resources,
deeply rooted in legal science, has recently been tied to causal methods and fair feature selection in \cite{belitz2021automating,grgic2018beyond}.
As the many notions of fairness arise, it is necessary to build up a comprehensive framework of multiple fairness criteria, especially when there is not a widely-recognised trivial fairness notion should be used \cite{awasthi2020beyond} or when certain notions are incompatible with one another \cite{chakraborty2022weighted}. One could consider dynamically learning fair policies using feedback \cite{wen2021algorithms,d2020fairness}.


Somewhat removed from the mainstream literature, there are some excellent works in \cite{foulds2020intersectional,yang2020fairness}, who give the definition of independent subgroups, intersectional subgroups, and gerrymandering subgroups for the situations of overlapping subgroups. 
\cite{tran2021differentially,pinzon2021impossibility,chang2021privacy,cummings2019compatibility} discuss the trade-off between differential privacy \cite{Dwork2006differential} which is an important direction for further work.
An empirical study of real-world problems in \cite{rodolfa2021empirical} challenged the existence or magnitude of the trade-off accuracy between fairness.
While maintaining accuracy, there exist tension between fairness and interpretability (feature deduction) \cite{agarwal2021trade}. 
For the cases of unknown or ambiguous protected attributes, \cite{amini2019uncovering} introduce the latent variables which may over-represent some subgroups as the proxies of unknown protected attributes.
\cite{wang2020robust} use the technologies of distributional robust optimisation to minimise the worse-case expected loss of the predictor with an upper bound on the distance between the distribution of ambiguous and actual protected attributes.
\cite{zhang2021assessing,Jeong2021FairnessWI} discuss the impact of missing data on fairness in uniformly sampled time series, which our model of non-uniformly sampled trajectories largely avoids. Some literature in \cite{caton2022impact,fernando2021missing} also discusses imputation strategies for fairness with respect to missing data.



Our approach to addressing the under-representation bias is rooted within the imbalanced-learning literature e.g., \cite{he2013imbalanced,rolf2021representation} and presents a step forward within the fairness in forecasting studied recently by \cite{gajane2017formalizing,chouldechova2017fair,NIPS20199603,Jeong2021FairnessWI}, as outlined in the excellent survey of   
\cite{chouldechova2020snapshot,barocas2017fairness}. 
On a more technical level, our work on fairness in learning linear dynamical systems is complemented by several recent studies involving dynamics and fairness \cite{mouzannar2019fair,paassen2019dynamic,jung2020fair}, and several even more recent studies on the learning of non-linear dynamics \cite{salmela2021predicting,lu2021learning}.
We rely crucially on tools developed in non-commutative polynomial optimisation \cite{Pironio2010,wang2019tssos,wang2020chordal} and non-commutative algebra \cite{gelfand1943imbedding,segal1947irreducible,mccullough2001factorization,helton2002positive}, which have not seen much use in Statistics and Machine Learning, yet.  

\section{Our Models}
\label{sec:models}

As the simplest example of the use of subgroup fairness and instantaneous fairness, cf. Equations~\eqref{equ:obj-Subgroup-Fair} and \eqref{equ:obj-Instant-Fair}, consider their applications in linear regression. For simplicity, let us assume that the cardinality of each subgroup is the same and the lengths of all trajectories are equal. Then:
\begin{equation*}
\begin{split}
    &\min_{z,A,f_t,t\in\mathcal{T}^+} z\qquad\qquad\qquad\qquad\quad\textrm{Subgroup Fairness in LR} \\
     &\textrm{s.t.} \quad z \geq \sum_{i\in\mathcal{I}^{(s)} ,t\in\mathcal{T}^+}\lvert Y^{(i,s)}_t - f_t\rvert,\;\forall s\in\mathcal{S}\\ 
     &\qquad f_t = A X_t,  \\
    \vspace{10pt}
    &\min_{z,A,f_t,t\in\mathcal{T}^+} z \qquad\qquad\qquad\quad\textrm{Instantaneous Fairness in LR}\\ 
    &\textrm{s.t.} \quad z \geq \lvert Y^{(i,s)}_t - f_t \rvert,\;\forall i\in\mathcal{I}^{(s)},s\in\mathcal{S},t\in\mathcal{T}^+\\
    &\qquad f_t = A X_t,     
\end{split}
\end{equation*}
where $A$ concatenates the regression coefficients. $X_t$ concatenates explanatory variables. $f_t$ is the dependent variable and $Y_t^{(i,s)}$ is the actual observation in a compatible fashion. 
The auxiliary scalar variable $z$ is used to reformulate ``$\max$'' in the objective in Equations~\eqref{equ:obj-Subgroup-Fair} and \eqref{equ:obj-Instant-Fair}.

Next, let us consider more elaborate models, which assume that there exists a linear dynamical system (LDS) 
corresponding to each subgroup $s\in\mathcal{S}$.
A discrete-time model of a linear dynamical system $\mathcal{L}=(G,F,V,W)$ \cite{WestHarrison} suggests that the random variable $Y_t \in\mathbb{R}^{m}$ capturing the observed component ( i.e., output, observations or measurements) evolves over time $t\geq 1$ according to:
\begin{equation}
\begin{split}
    \phi_{t}  &= G \phi_{t-1} + w_t, \\
    Y_t &= F' \phi_t + v_t, 
\end{split}
\label{equ:LDS}
\end{equation}
where $\phi_t \in \mathbb{R}^{n}$ is the  hidden component (state) and $G \in \mathbb{R}^{n\times n}$ and $F\in\mathbb{R}^{n\times m}$ are compatible system matrices. Random variables $w_t,v_t$ capture normally-distributed process noise and observation noise, with zero means and covariance matrices $W \in\mathbb{R}^{n\times n}$ and $V \in\mathbb{R}^{m\times m}$, respectively.



The objectives in Equations~\eqref{equ:obj-Subgroup-Fair} and \eqref{equ:obj-Instant-Fair}, subject to the state-evolution and observation equations, in Equation~\eqref{equ:LDS}, yield two operator-valued optimisation problems. 
Their inputs are $Y_{t}^{(i,s)},t\in\mathcal{T}^{(i,s)},i\in\mathcal{I}^{(s)},s\in\mathcal{S}$, i.e., the observations of multiple trajectories and the multipliers $\lambda_1,\lambda_2>0$. The operator-valued decision variables $\mathcal{O}$ include operators $F, G$, vectors $m_t, \omega_t$, and scalars $f_t, \nu_t, z$. Notice that $t$ ranges over $t\in\mathcal{T}^+$, except for $m_t$, where $t\in\mathcal{T}^+\cup\{0\}$. The auxiliary scalar variable $z$ is used to reformulate ``$\max$'' in the objective in Equations~\eqref{equ:obj-Subgroup-Fair} and \eqref{equ:obj-Instant-Fair}.
Since the process noise and observation noise are assumed to be samples of mean-zero normally-distributed random variables, we add the sum of squares of $\omega_t$ (resp. $\nu_t$) to the objective with the positive multiplier $\lambda_1$ (resp. $\lambda_2$), seeking a solution with $\omega_t$ (resp. $\nu_t$) close to zero.
Overall, the subgroup-fair and instant-fair formulations read:
\begin{mini} 
	  {\mathcal{O}}{z + \lambda_1 \sum_{t\geq 1} \omega_t^2+\lambda_2 \sum_{t\geq 1} \nu_t^2}{\label{min:FairA}}{ 
	  }{\textrm{Subgroup-Fair}}
	  \addConstraint{z}{\geq
	  \frac{1}{\lvert\mathcal{I}^{(s)}\rvert}
        \sum_{i \in \mathcal{I}^{(s)}} \frac{1}{\lvert\mathcal{T}^{(i,s)}\rvert}
        \sum_{t\in \mathcal{T}^{(i,s)}} \loss^{(i,s)}(f_t)}
        {, s\in\mathcal{S}}
	  \addConstraint{m_t}{= G m_{t-1}+\omega_t}{, t\in\mathcal{T}^+}
	  \addConstraint{f_t}{= F' m_{t}+\nu_t}{, t\in\mathcal{T}^+.}
\end{mini}
\begin{mini} 
	  {\mathcal{O}}{z  + \lambda_1 \sum_{t\geq 1} \omega_t^2+\lambda_2 \sum_{t\geq 1} \nu_t^2}{\label{min:FairB}}{
	  }{\textrm{Instant-Fair}}
	  \addConstraint{z}{\geq
	  \loss^{(i,s)}(f_t)}{, t\in\mathcal{T}^{(i,s)},i\in\mathcal{I}^{(s)},s\in\mathcal{S}}
	  \addConstraint{m_t}{= G m_{t-1}+\omega_t}{, t\in\mathcal{T}^+}
	  \addConstraint{f_t}{= F' m_{t}+\nu_t}{, t\in\mathcal{T}^+.}
\end{mini}

For comparison, we use a traditional formulation that focuses on minimising the overall loss: 
\begin{mini}
	  {\mathcal{O}}{
	  \sum_{s \in \mathcal{S}} \quad
	  \sum_{i \in \mathcal{I}^{(s)}} 
	  \sum_{t\in \mathcal{T}^{(i,s)}} \loss^{(i,s)}(f_t) + \lambda_1 \sum_{t\geq 1} \omega_t^2+\lambda_2 \sum_{t\geq 1} \nu_t^2}{\label{min:Unfair}}{ 	  }{\quad\textrm{Unfair}} 
	  \addConstraint{\qquad\qquad m_t}{= G m_{t-1}+\omega_t}{, t\in\mathcal{T}^+}
	  \addConstraint{\qquad\qquad f_t}{= F' m_{t}+\nu_t}{, t\in\mathcal{T}^+.}
\end{mini}


As we explain in the Appendix, the operator-valued optimisation problems (i.e., ``Unfair'', ``Instant-Fair'', and ``Subgroup-Fair'') can be convexified to any given accuracy, and thence solved efficiently, under a technical assumption
related to the stability of the LDS, 
which entails that the estimates of states and observations remain bounded, and thus all operator-valued decision variables remain bounded.

\section{Numerical Illustrations}

Under-representation bias considers the situation where some subgroups would be given unfair treatments, either due to the varying numbers or lengths of trajectories across subgroups.
In response to under-representation bias, we have introduced two natural fairness notions for forecasting.
We use a Linear Dynamic System to predict the next observation, as in Equation~\eqref{equ:LDS}.
Then, we have given two formulations associated with each notion. We tested our formulations on the famous Correctional Offender Management Profiling for Alternative Sanctions (COMPAS) dataset. 
Our implementation is available on-line at \url{https://github.com/Quan-Zhou/Fairness-in-Learning-of-LDS}.



\subsection{Generation of Biased Training Data}
\label{sec:Biased Training Data Generalisation}

To illustrate the impact of our models on data with varying degrees of under-representation bias, we consider a method to generate data with a given degree of bias, which is based on \cite[cf. Section 2.2]{blum2019recovering}.
Suppose that there is one advantaged subgroup (a) and one disadvantaged subgroup (d), i.e., $S=\{$a, d$\}$, with trajectories $\mathcal{I}^{(a)}$ and $\mathcal{I}^{(d)}$ in each subgroup. Under-representation bias can enter the training set in the following steps:
\begin{enumerate}
\item Consider that the LDS for both subgroups $\mathcal{L}^{(s)},s\in\mathcal{S}$ have the same system matrices: $$G^{(s)}=\begin{bmatrix}
0.99 & 0 \\
1.0 & 0.2 \end{bmatrix},F^{(s)}=\begin{bmatrix}
1.1 \\ 0.8 
\end{bmatrix},$$
while the covariance matrices $V^{(s)},W^{(s)},s\in\mathcal{S}$ are sampled randomly from a uniform distribution over $[0,1)$ and $[0,0.1)$, respectively. The initial states $m_0^{(s)}$ of both subgroups are $5$ and $7$.
    \item Observations $Y^{(i,s)}_t$ are sampled from corresponding LDS $\mathcal{L}^{(s)}$. Thus each $Y^{(i,s)}_t\sim\mathcal{L}^{(s)}$.
    \item Let $\mathcal{\beta}^{(d)}$ denote the probability that an observation from subgroup $d$ stays in the training data, and $0\leq \mathcal{\beta}^{(d)} \leq 1$. It can be seen as the ratio of the number of observations in disadvantaged subgroup to that of advantaged subgroup. The degree of under-representation bias can be controlled by simply adjusting $\mathcal{\beta}^{(d)}$.
    Smaller values of $\mathcal{\beta}^{(d)}$ correspond to higher level of bias in the training set.
\end{enumerate}
The last step makes the number of observations of the disadvantaged subgroup less than that of the advantaged subgroup when $0\leq \mathcal{\beta}^{(d)} <1$.
Hence, the advantaged subgroup becomes over-represented. 
Note that for a small sample size, it is necessary to make sure that there is at least one observation in each subgroup at each period. 

\subsection{Effects of Under-Representation Bias on Forecast}
\begin{figure}[!t]
    \centering
\includegraphics[width=0.6\textwidth]{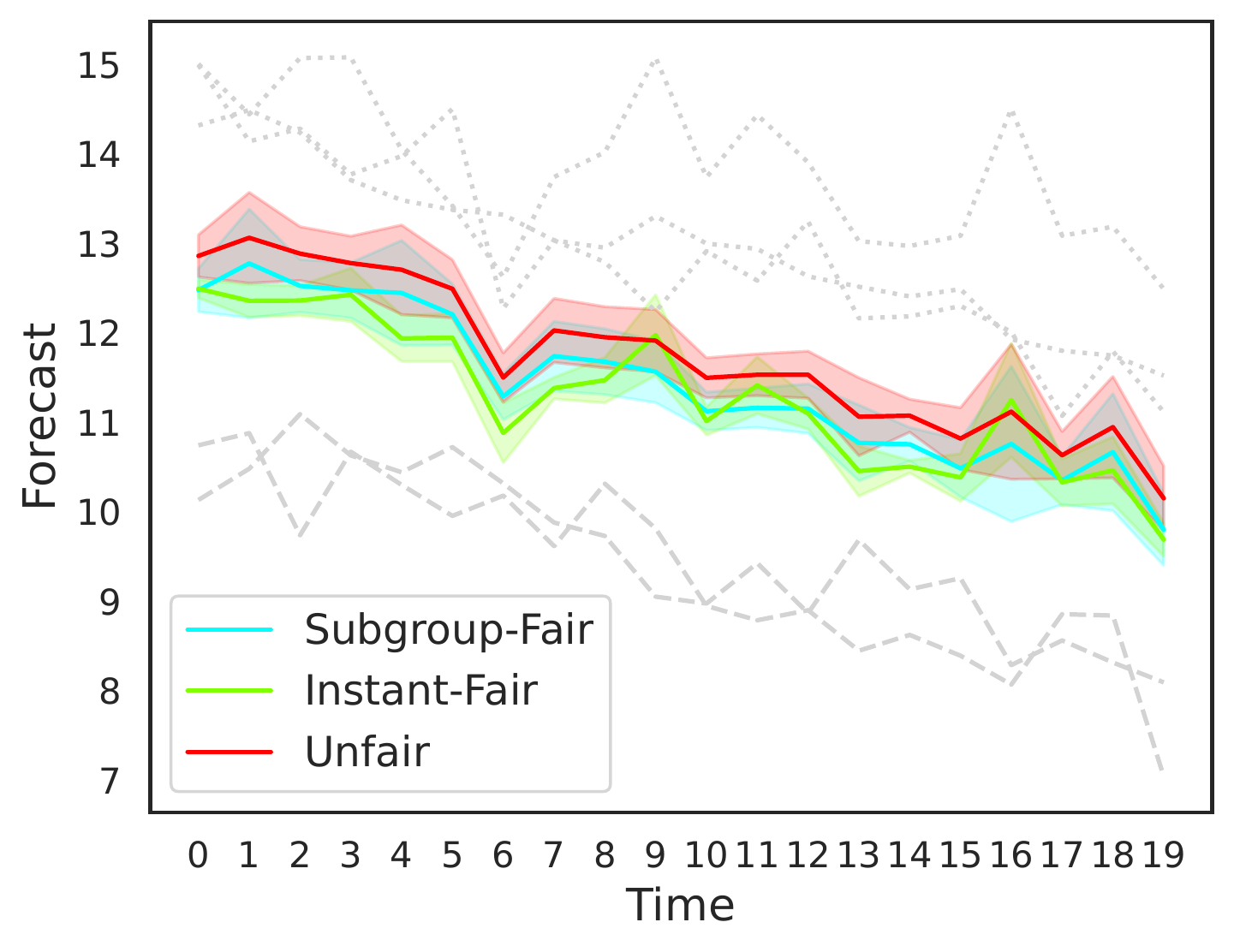} 
\caption{Forecast obtained using (\ref{min:FairA}-\ref{min:Unfair}): the solid lines in primary colours with error bands display the mean and standard deviation of the forecasts over 10 experiments. For reference, dotted lines and dashed lines in grey denote the trajectories of observations of advantaged and disadvantaged subgroups, respectively, before discarding any observations.
}
\label{fig:LinePlot}
\end{figure}

To illustrate the impact of our models on data with varying degrees of under-representation bias, suppose there is an advantaged subgroup (a) and a disadvantaged subgroup (d), i.e., $\mathcal{S}=\{a,d\}$.
Figure~\ref{fig:LinePlot} illustrates 10 experiments with general forecasting procedures. For each experiment, the same set of observations $Y_t^{(i,s)},t\in \mathcal{T}^{(i,s)}$, $i \in \mathcal{I}^{(s)}$, $s\in\mathcal{S}$ is reused, and the trajectories of advantaged and disadvantaged subgroups are denoted by dotted curves and dashed curves, respectively. However, in each experiment, a subset of observations with the same cardinality is randomly selected and discarded and thus a new biased training set is generated, albeit based on the same ``ground set'' of observations. The three models in Equations~\eqref{min:FairA}-\eqref{min:Unfair} are applied in each experiment with $\lambda_1$ of 1, 3, and 5, respectively, as chosen by iterating over integers 1 to 10, while $\lambda_2$ remains 0.01, 
The mean of forecast $f_t$ across 10 experiments and its standard deviation are shown as solid curves with error bands. The red curve gives an overview of how a prediction without considering fairness would cause an unevenly distributed prediction loss for each subgroup. This is simply because the advantaged subgroup is of larger cardinality, and the overall loss would decrease more steeply if the predicted trajectory gets closer to the advantaged subgroup.



\subsection{Fairness as a Function of Bias}
\begin{figure}[!t]    
    \centering
\includegraphics[width=0.6\textwidth]{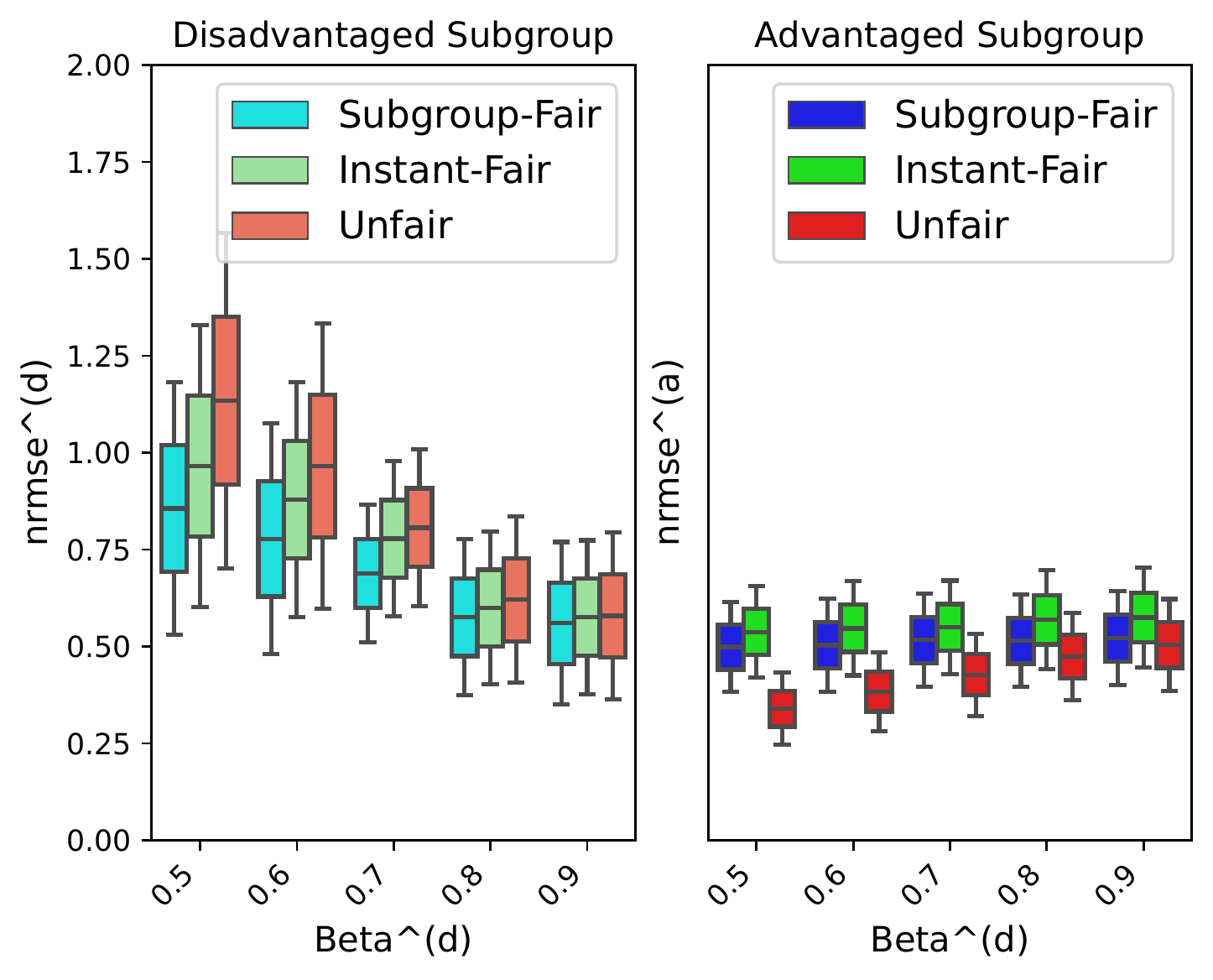}
\caption{Accuracy as a function of the degree of under-representation bias: the boxplot of $\nrmse^{(s)},s\in\mathcal{S}$ against $\mathcal{\beta}^{(d)}$, where $\mathcal{\beta}^{(d)}=[0.5,0.55,\dots,0.9]$. Each box for the quartiles of $\nrmse^{(s)}$ is obtained from $10$ experiments, with observations generated in the same way as the ones in Figure~\ref{fig:LinePlot}.}
\label{fig:BetaPlot}
\end{figure}

Figure~\ref{fig:BetaPlot} suggests how the degree of bias affects accuracy in each subgroup with and without considering fairness. 
With the number of trajectories in both subgroups set to two, i.e., $|I_a|=|I_d|=2$, we vary the degree of bias by adjusting $\mathcal{\beta}^{(d)}$ within the range of $[0.5, 0.9]$. 
To measure the effect of the degree on accuracy, we introduce the normalised root mean square error (nrmse) fitness value for each subgroup $s\in\mathcal{S}$: 
\begin{equation*}
    \mathrm{\nrmse^{(s)}}:=\sqrt{
    \frac{\sum_{i\in\mathcal{I}^{(s)}}\sum_{t\in\mathcal{T}^{(i,s)}}\Brack{ Y_t^{(i,s)}-f_t}^2}{\sum_{i\in\mathcal{I}^{(s)}}\sum_{t\in\mathcal{T}^{(i,s)}}\Brack{ Y_t^{(i,s)}-\mean^{(s)}}^2} },
\end{equation*}
where $\mean^{(s)}:=\frac{1}{\lvert\mathcal{I}^{(s)}\rvert} \sum_{i\in\mathcal{I}^{(s)}}\frac{1}{\lvert\mathcal{T}^{(i,s)}\rvert}\sum_{t\in\mathcal{T}^{(i,s)}} Y_t^{(i,s)}$. Higher $\nrmse^{(s)}$ indicates lower accuracy for subgroup $s$, i.e., the predicted trajectory of subgroup-blind $\mathcal{L}$ is further away from this subgroup. 

The training data are generated in the same way as the set of observations used in Figure~\ref{fig:LinePlot}, but with two trajectories in each subgroup ($|I_a|=|I_d|=2$).
Then, the biased training data generalisation process (described in Section~\ref{sec:Biased Training Data Generalisation}) is applied in each experiment with the value of $\mathcal{\beta}^{(d)}$ selecting from $0.5$ to $0.9$ at the step of $0.1$.
For each value of $\mathcal{\beta}^{(d)}$, three models in Equations~\eqref{min:FairA}-\eqref{min:Unfair} are conducted for $10$ experiments with a new biased training set in each experiment.
Therefore, the quartiles of $\nrmse^{(s)}$ across $10$ experiments for each subgroup are shown as boxes in Figure~\ref{fig:BetaPlot}.

One could expect that nrmse fitness values of the advantaged subgroup in Figure~\ref{fig:BetaPlot} to be generally lower than those of the disadvantaged subgroup ($\nrmse^{(d)}\geq\nrmse^{(a)}$), leaving a gap. Those gaps narrow down as $\mathcal{\beta}^{(d)}$ increases, simply because more observations of disadvantaged subgroup remain in the training data. Compared the to ``Unfair'', models with fairness constraints, i.e., ``Subgroup-Fair'' and ``Instant-Fair'', show narrower gaps and higher fairness between two subgroups. More surprisingly, when $\nrmse^{(a)}$ decreases as $\mathcal{\beta}^{(d)}$ gets close to $0.5$, ``Subgroup-Fair'' model still can keep $\nrmse^{(d)}$ at almost the same level, indicating a rise in overall accuracy. This is in contrast to the results of \cite{zliobaite2015relation,dutta2019information} in classification, but in line with recent work \cite{maity2021does}.

\subsection{Evaluation of Run Time of the Method}

Minimising multivariate operator-valued polynomial optimisation problems (\ref{min:FairA}-\ref{min:Unfair}) is a known non-trivial problem. We  exploit
sparsity-exploiting variants (TSSOS) of the globally convergent Navascués-Pironio-Acín (NPA) hierarchy used in the proof of Theorem \ref{T:covergence}, to develop fast computational methods.
See \cite{klep2019sparse,wang2019tssos,wang2020chordal,wang2020exploiting}.
The SDP of a given order in the respective hierarchy can be constructed using \texttt{ncpol2sdpa} 1.12.2\footnote{\url{https://github.com/peterwittek/ncpol2sdpa}} of \cite{wittek2015algorithm} or the tools of \cite{wang2020exploiting} \footnote{\url{https://github.com/wangjie212/TSSOS}} and then solved by \texttt{mosek} 9.2 of \cite{mosek2020mosek}.


In Figure~\ref{fig:TimePlot}, we illustrate the run-time and size of the relaxations as a function of the length of the time window.
Models ``Subgroup-Fair'' in Equation~\eqref{min:FairA} and ``Instant-Fair'' in Equation~\eqref{min:FairB} are implemented three times for each length of the time window, with the same data set used in Figure~\ref{fig:BetaPlot}.
The type of models, i.e., ``Subgroup-Fair'' (solid curves) and ``Instant-Fair'' (dashed curves), is distinguished by line styles.
The deep-pink and cornflower-blue curves show the run-time of the first-order SDP relaxation of NPA and the second-order SDP relaxation of TSSOS hierarchy, respectively, implemented with five CPUs and 64GB of memory per CPU.
The mean and mean $\pm$ 1 standard deviation of run-time across 3 experimental runs are presented by curves with shaded error bands.
The grey curve displays the number of variables in the first-order SDP relaxation of our models in Equations~\eqref{min:FairA} against the length of time window.
Further, models ``Subgroup-Fair'' in Equation~\eqref{min:FairA} and ``Instant-Fair'' in Equation~\eqref{min:FairB} are implemented once via TSSOS for each length of the time window, using COMPAS dataset, as the experiment in Figure~\ref{fig:COMPASPlot}, with the run-time displayed by a coral solid curve and a coral dashed curve, respectively.
It is clear that the run-time of TSSOS exhibits a modest growth with the length of time window, while that of the plain-vanilla NPA hierarchy grows much faster.

\begin{figure}
\centering{
\includegraphics[width=0.65\textwidth]{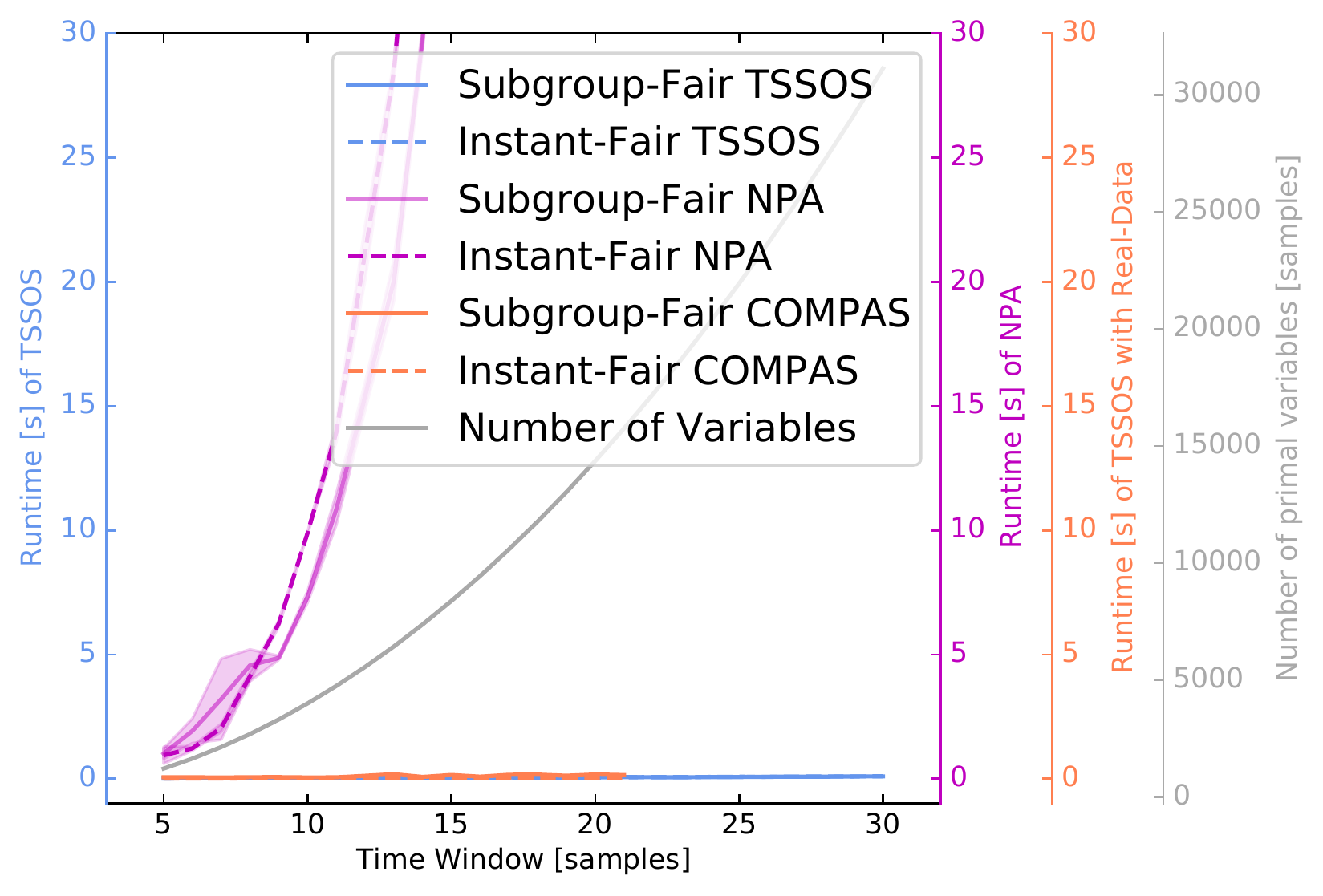} }
\caption{
The dimensions of relaxations and the run-time of SDPA thereupon as a function of the length of time window. Run-time of TSSOS and NPA is displayed in cornflower-blue and deep-pink curves, respectively, while the grey curve shows the number of variables in relaxations. Additionally, the run-time of TSSOS using the COMPAS dataset in Figure~\ref{fig:COMPASPlot}, is also displayed as coral-coloured curves. 
For run-time, the mean and mean $\pm$ one standard deviations across three experimental runs are presented by curves with shaded error bands.
}
\label{fig:TimePlot}
\end{figure}

\section{Numerical Results of COMPAS Dataset}
Finally, we wish to suggest the broader applicability of the two notions of subgroup fairness and instantaneous fairness. We use the well-known dataset \cite{angwin2016machine} of estimates of the likelihood of recidivism made by the Correctional Offender Management Profiling for Alternative Sanctions (COMPAS), as used by courts in the United States, cf. Appendix~\ref{sec:COMPAS dataset}. 
The COMPAS dataset, analysed by ProPublica, comprises of defendants' gender, race, age, charge degree, COMPAS recidivism scores, two-year recidivism label, as well as information on prior incidents. 
The COMPAS recidivism scores, ranging from 1 to 10, are positively related to the estimated likelihood of recidivism, given by the COMPAS system.
The two-year recidivism label denotes whether a person actually got rearrested within two years (label 1) or not (label 0). If the two-year recidivism label is $1$, there is also information concerning the recharge degree and the number of days until the person gets rearrested.
The dataset also consists of information on ``Days before Re-offending'', which is the date difference between the defendant's crime offend date and recharge offend date. It could be negatively correlated to the defendant's actual risk level while the COMPAS recidivism scores would be the estimated risk level.
\subsection{An Alternative Approach to COMPAS Dataset}
From the COMPAS dataset cf. Appendix~\ref{sec:COMPAS dataset}, 
we choose $119$ defendants with recidivism label being 1, who are either African-American or Caucasian, male, within the age range of 25-45, and with prior crime counts less than two, with charge degree M and recharge degree M1 or M2.
The defendants are partitioned into two subgroups by their ethnicity and then partitioned by the type of their recharge degree (M1 or M2). Hence, we obtain the $4$ sub-samples.

In the days-to-reoffend-vs-score plot, such as Figure~\ref{fig:COMPASPlot}, dots suggest COMPAS recidivism scores of the four sub-samples against the days before rearrest.
Each curve represents one model, either subgroup-dependent (plotted thin) or Subgroup-Fair (plotted thick).
The thick cyan curve is the race-blind prediction from our Subgroup-Fair method, which equalises scores across the two subgroups.
Ideally, one should like to see smooth, monotonically decreasing curves, overlapping across all subgroup-dependent models. For each sub-sample, the aggregate deviation from the Subgroup-Fair curve would be similar to the aggregate deviations of other sub-samples. 

In Figure~\ref{fig:COMPASPlot}, the dots are far from the ideal monotonically decreasing curve. Furthermore, the subgroup-specific curves (plotted thin) are very different from each other (``subgroup-specific models are unfair''). Specifically, the red and yellow curves are above the sky blue and cornflower blue curves (``at the same risk level, Caucasian defendants get lower COMPAS scores''). 
Notice that the subgroup-dependent models are obtained as follows: we discretise time to $20$-day periods. 
For each subgroup, we check if anyone re-offends within $20$ days (the first period). If so, the (average) COMPAS score (for all cases within the 20 days) is recorded as the observation of the first period of the trajectory of the sub-sample. If not, there is no observation of this period. We repeat this for the subsequent periods and for the three other sub-samples. 



\begin{figure*}[htp]
\centering
\centering{
\includegraphics[width=0.65\textwidth]{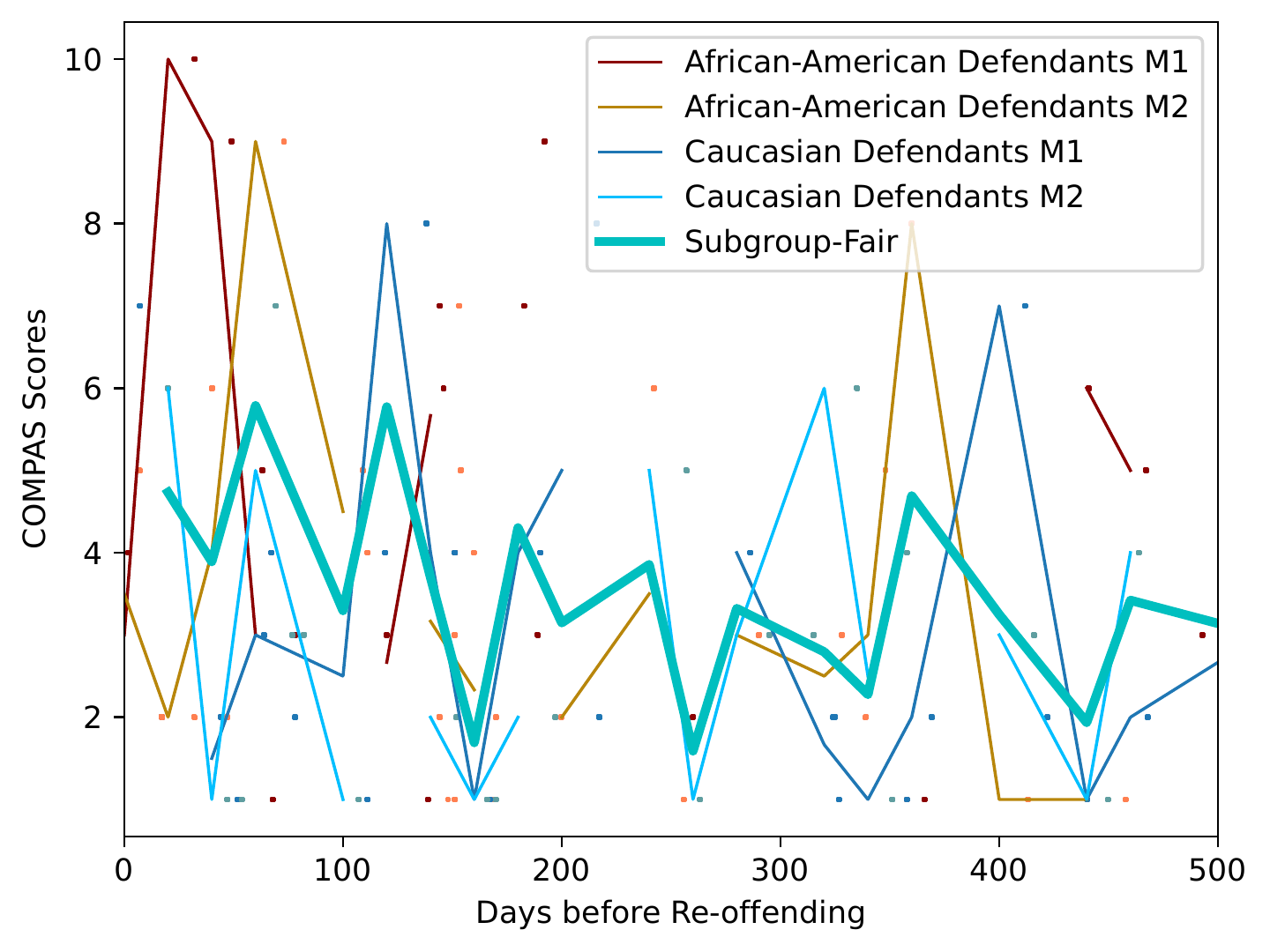}}
\caption{COMPAS recidivism scores of African-American and Caucasian defendants against the actual days before their re-offending. The sample of defendants' scores is divided into four sub-samples based on race and the type of re-offending, distinguished by colours. Dots and curves with the same colour denote the scores of one sub-sample and the trajectory extracted from the scores, respectively. The cyan curve displays the result of ``Subgroup-Fair'' model with $4$ trajectories.}
\label{fig:COMPASPlot}
\end{figure*}

\subsection{A Comparison Against the State of the Art on COMPAS Dataset}
\label{sec:the-State-of-the-Art}

Generally speaking, fairness objectives or constraints might not be easily applied to models that are already in use in applications. For such systems, revision of the model output with some post-processing tools would be a widely applicable and practical solution. 
Since we have shown the existence of unfairness in COMPAS recidivism scores, we now illustrate this approach to improve upon the COMPAS scores by using post-processing methods that embed our fairness notions. We then compare our methods using the AI Fairness 360 toolkit \texttt{AIF360}\footnote{\url{https://github.com/Trusted-AI/AIF360}} of \cite{aif360-oct-2018}.

\paragraph{The training set and test sets: }
The sample set contains 1005 defendants, whose race is either African-American or Caucasian, selected from the first 1200 rows of the COMPAS dataset cf. Appendix~\ref{sec:COMPAS dataset}.
For a single trial, we randomly pick $80\%$ of samples as the training set then test the output on the rest $20\%$ of the samples. Each trial uses a new batch of the training set and the test set generated from the same sample set of 1005 defendants.


Notice that the sample set is biased as there are only 403 Caucasian defendants.
Since existing data may generally contain biases, stemming for example from poor information acquisition process \cite{bertail2021learning}, due to historical and social injustices \cite{ferrer2021bias}, we seek other methods to validate our approaches.
To this end we randomly remove some observations of African-American defendants from the original test set, such that the number of defendants in both subgroups are the same.
The resulted subset is called the re-weighted test set.

\paragraph{Performance indices: }
We use three baseline fairness metrics (i.e., independence, separation, and sufficiency), as well as prediction inaccuracy, to measure the performance of post-processing models.
Essentially, the sample set includes two race subgroups $\mathcal{S}=\{$African-American defendants (AA), Caucasian defendants (C)$\}$. The recidivism label and the prediction of the recidivism label outputted from a model, are denoted by binary variables $Y$ and $f$ respectively. 
Let $P(f=f|Y=Y,s=s)$ be the probability of a defendant from subgroup $s$ with recidivism label $Y$
being predicted to recidivism label $f$.
We set $Y=1$ and $f=1$ to be a defendant re-offending and being predicted to re-offend, thus they are negative events.
Further, we define the indices of three baseline fairness metrics (IND, SP, SF), inaccuracy (INA) and their re-weighted versions (i.e., INDrw, SPrw, SFrw, INArw) in Equation~\eqref{equ:indices}:
\begin{equation}
\begin{split}
    \textrm{IND(rw)}&:=\lvert P(f=1\mid s=\textrm{AA})-P(f=1\mid s=\textrm{C}) \rvert,\\
    \textrm{SP(rw)}&:=\lvert P(f=0\mid Y=1,s=\textrm{AA})-P(f=0\mid Y=1,s=\textrm{C}) \rvert \\
    &\; +\lvert P(f=1\mid Y=0,s=\textrm{AA})-P(f=1\mid Y=0,s=\textrm{C}) \rvert, \\
    \textrm{SF(rw)} &:=\lvert P(Y=1\mid f=1,s=\textrm{AA})-P(Y=1\mid f=1,s=\textrm{C}) \rvert\\
    &\; +\lvert P(f=0\mid Y=0,s=\textrm{AA})-P(f=0\mid Y=0,s=\textrm{C}) \rvert, \\
    \textrm{INA(rw)} &:=P(Y\neq f),
\end{split}
\label{equ:indices}
\end{equation}
where, for example, IND(rw) implies both indices IND and INDrw. The difference between IND and INDrw is that IND measures the performance of a model on the original test set while INDrw on the re-weighted test set. The same applies for SP(rw), SF(rw), INA(rw).
To interpret the definitions in Equation~\eqref{equ:indices}: IND(rw) are race-wise absolute difference of negative rate; SP(rw) combine race-wise absolute difference of false false positive and false negative rates; SF(rw) captures the race-wise absolute difference of positive predictive value and negative predictive value; INA(rw) measure inaccuracy of test set.
In our setting, smaller values of IND(rw), SP(rw), SF(rw), INA(rw) indicate better performance in terms of independence, separation, sufficiency and accuracy, respectively. 

\paragraph{Classification thresholds: }
Since the outputs of all post-processing tools implemented in this paper and COMPAS system are scores from varying intervals and, to transfer the scores to binary labels, we would need a threshold such that $f=1$ when the score, denoted by $g$ is higher than this threshold, and $f=0$ otherwise. 
For ease of comparison, we define uni-race thresholds which differ across different models but all of them are defined as the $x^{\textrm{th}}$ percentile of all scores outputted by the corresponding model, where $x\in[0,100]$ is fixed. 
Notice that there is a gap between the percentage of recidivism in African-American defendants ($46\%$) and Caucasian defendants ($59\%$) in terms of the sample set, and we call those percentages base rates, as in \cite{pleiss2017fairness}.
In fairness to African-American defendants, we introduce race-wise thresholds using base rates: for each model, the percentage of defendants in a subgroup whose scores are higher than the subgroup's threshold needs to be the same as the subgroup's base rate.

\paragraph{Post-processing methods: }

Associated with our fairness notions, we propose two post-processing methods. 
Both methods use simple race-wise linear regression models 
\begin{equation}
g^{(i,s)}=A^{(s)} X^{(i,s)} + e^{(s)}, i\in\mathcal{I}^{(s)}, s\in\mathcal{S},
\label{equ:post-process-constraints}
\end{equation}
where the subscript $t$ is removed such that we only consider prediction in one period. In other words, we cast the problem of prediction into classification.
$A^{(s)}$ concatenates the regression coefficients. $X^{(i,s)}$ concatenates explanatory variables, including COMPAS recidivism score, prior incidents (i.e., the sum of ``prior counts'', ``juv\_ fel\_ count'' and ``juv\_ misd\_ count''), age category (i.e., 1 if age is less than 25 and 0 otherwise), and recidivism label.
$e^{(s)}$ corresponds to a noise to the linear relationship.
$g^{(i,s)}$ is the post-processed recidivism score of the defendant $i$ in subgroup $s$, and $Y^{(i,s)}$ is the actual recidivism label (i.e., the ground truth).
Let $\loss^{(i,s)}(g):=\|Y^{(i,s)}-g^{(i,s)}\|$, our post-processing methods are Equation~\eqref{equ:post-process-formulations} subject to Equation~\eqref{equ:post-process-constraints}, with $\lambda_3=0.05$. Further, the score $g^{(i,s)}$ would be mapped to the binary prediction of recidivism label $f^{(i,s)}$ using a threshold. 
\begin{equation}
\begin{array}{rl}
\textrm{\textbf{Subgroup-Fair}}
&\min_{g,A,e} \left \{ \max_{s\in\mathcal{S}} \left \{
\frac{1}{\lvert\mathcal{I}^{(s)}\rvert}
\sum_{i \in \mathcal{I}^{(s)}} \loss^{(i,s)}(g) \right \} + \lambda_3 \sum_{s\in\mathcal{S}}  \left(e^{(s)}\right)^2 \right\}\\
\textrm{\textbf{Instant-Fair}}
&\min_{g,A,e}\left \{ \max_{i\in\mathcal{I}^{(s)},s\in\mathcal{S}} \left \{ \loss^{(i,s)}(g) \right \}  + \lambda_3 \sum_{s\in\mathcal{S}}  \left(e^{(s)}\right)^2\right \}
\end{array}
\label{equ:post-process-formulations}
\end{equation}

\paragraph{}
In Figure~\ref{fig:AIF360}, we test the performance of all post-processing methods implemented in AI Fairness 360 toolkit: 
\begin{itemize}
    \item ``AIF360'': calibrated equalised odds post-processing with cost constraint being a combination of both false negative rate and false positive rate, as suggested by the authors of the AI Fairness 360 toolkit \cite{aif360-oct-2018},
    \item ``CaliEqOdds(fnr)'': calibrated equalised odds post-processing with cost constraint being the false negative rate,
    \item ``CaliEqOdds(fpr)'': calibrated equalised odds post-processing with cost constraint being the false positive rate,
    \item ``EqOdds'': equalised odds post-processing,
    \item ``RejectOption'': reject option classification.
\end{itemize}
Note that ``AIF360'', ``CaliEqOdds(fnr)'', ``CaliEqOdds(fpr)'' are based on the fairness notion of ``calibrated equalised odds'' in \cite{pleiss2017fairness}. 
``EqOdds'' is derived from the fairness notion of ``equalised odds'' in \cite{hardt2016equality}.
``RejectOption'' comes from \cite{kamiran2012decision}, which is rooted in the fairness notion of ``demographic parity'' \cite{calder2009experimental}.
Those five methods are implemented in five trials for each of three uni-race thresholds $x=[47, 53, 60]$ ($5\times 5\times 3$ runs). 
The left subplot displays mean values of eight indices across all five trials and three thresholds, with each angular axis representing one index, and each colour denoting one post-processing method.
The right subplot represents the values of all experimental runs as dots in a circular sector, with each sector representing one index.
Each sector is labelled with the index immediately counter-clockwise to it. For instance, the sector between the labels ``IND'' and ``INDrw'' displays the ``IND'' values of all experimental runs.
In both subplots, the value represented by a dot is displayed by its distance from the original point, with shorter distances indicating better performance.

Figure~\ref{fig:AIF360} depicts a summary of the state of the art on COMPAS dataset, and how we select the appropriate method to benchmark our own algorithms.
Referring to this figure, since in both subplots, most of yellow (``AIF360'') dots are relatively closer to the origin than other dots, it seems fair to consider ``AIF360'' as the state of the art post-processing method, at least within those implemented in AI Fairness 360 toolkit, and to compare our methods against it in the following.

In Figure~\ref{fig:postprocess}, COMPAS scores (``COMPAS'' red), the state of the art (``AIF360'', yellow), and the outputs of our methods ``Subgroup-Fair'' (blue) and ``Instant-Fair'' (green) are evaluated across 50 trials using base rates as race-wise thresholds ($50\times 4$ runs).
The left subplot illustrates the average performance of four methods, where dots represent the mean values of original indices, and bars are those of re-weighted indices.
The right subplot displays fairness performance of all experimental runs in a triangular area.
For a single run, the original fairness metrics (i.e., IND, SP, SF) are denoted by one dot and re-weighted ones (i.e., INDrw, SPrw, SFrw) are shown as one square. 
A marker, that is, a dot or a square, represents the value of IND(rw), SP(rw), SF(rw), by its positions along the left, right, and bottom axes in ternary coordinates.
The colour of this marker denotes the method used in this run.

Figure~\ref{fig:postprocess} illustrates the performance of our methods compared with the state of the art and COMPAS scores when using base rates as race-wise thresholds. 
As we can see on the left, there is not much difference between the values of original indices and their re-weighted versions, except that ``Instant-Fair'' generally performs worse in re-weighted version than in original version. 
It implies that ``Instant-Fair'' might not be appropriate to use in this case because its performance varies with the test set being re-weighted or not.
The performance of ``Subgroup Fair'' is similar to that of ``COMPAS'', but with a slight improvement in IND(rw) and SP(rw).
``AIF360'' seems to sacrifice a lot of accuracy for lower average fairness indices, while its fairness performance shows a lot of variability, as shown in  the right subplot.
On the contrary, the concentration of blue and green markers (i.e., dots and squares) indicates less variability of our methods.

In Figure~\ref{fig:thresholds}, 
COMPAS scores (``COMPAS'' red), the state of the art (``AIF360'', yellow), and the outputs of our methods ``Subgroup-Fair'' (blue) and  ``Instant-Fair'' (green) are evaluated across 50 trials, with 10 different uni-race thresholds $x=[20,27,\dots,80]$ ($50\times 10\times 4$ runs).
Each subplot represents the mean (curves) and mean $\pm$ one standard deviations (shaded error bands) of the corresponding index across 50 trials, against 10 different uni-race thresholds $x=[20,27,\dots,80]$
with four methods distinguished by the same palette as in Figure~\ref{fig:postprocess}.

Figure~\ref{fig:thresholds} depicts an investigation of the performance of ``Subgroup-Fair'', ``Instant-Fair'', ``AIF360'' and ``COMPAS'' for different uni-race thresholds. 
Notice that the error bands generally overlap each other in subplots of the first and third rows (which represent the indices IND(rw) and SF(rw)). 
One potential implication of this work is that there might not be significant differences amongst the four methods, in terms of IND(rw) and SF(rw), when using uni-race thresholds. 
If we look at the remaining indices, ``Subgroup-Fair'' (blue) surpasses ``COMPAS'' (red) in terms of the performance indexes SP(rw), and both achieve the best in INA(rw).
Furthermore, ``AIF360'' has relatively low values of SP(rw), but at a large expense of INA(rw). 
\begin{figure*}[tbp]
\includegraphics[width=0.49\textwidth]{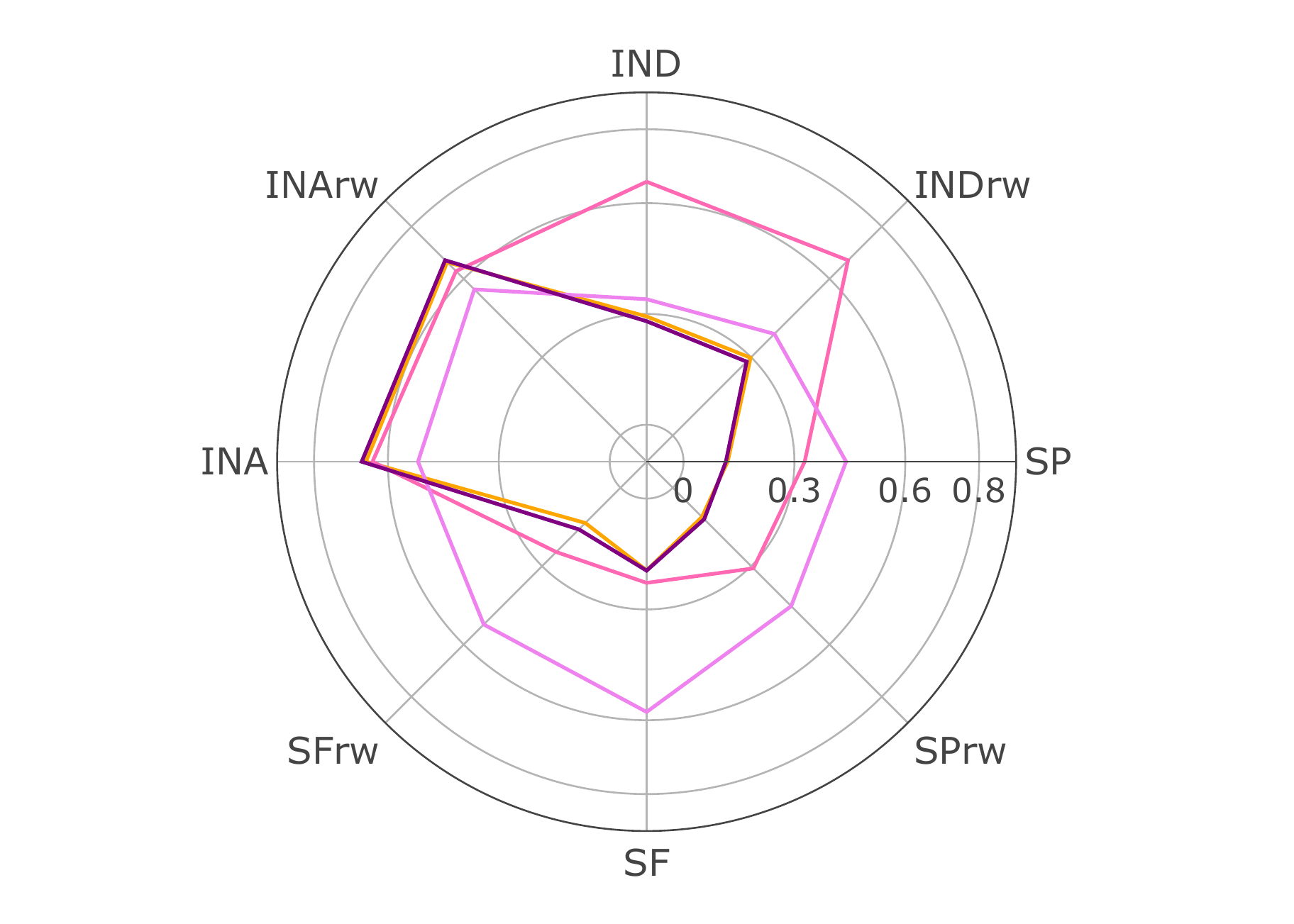}
\includegraphics[width=0.49\textwidth]{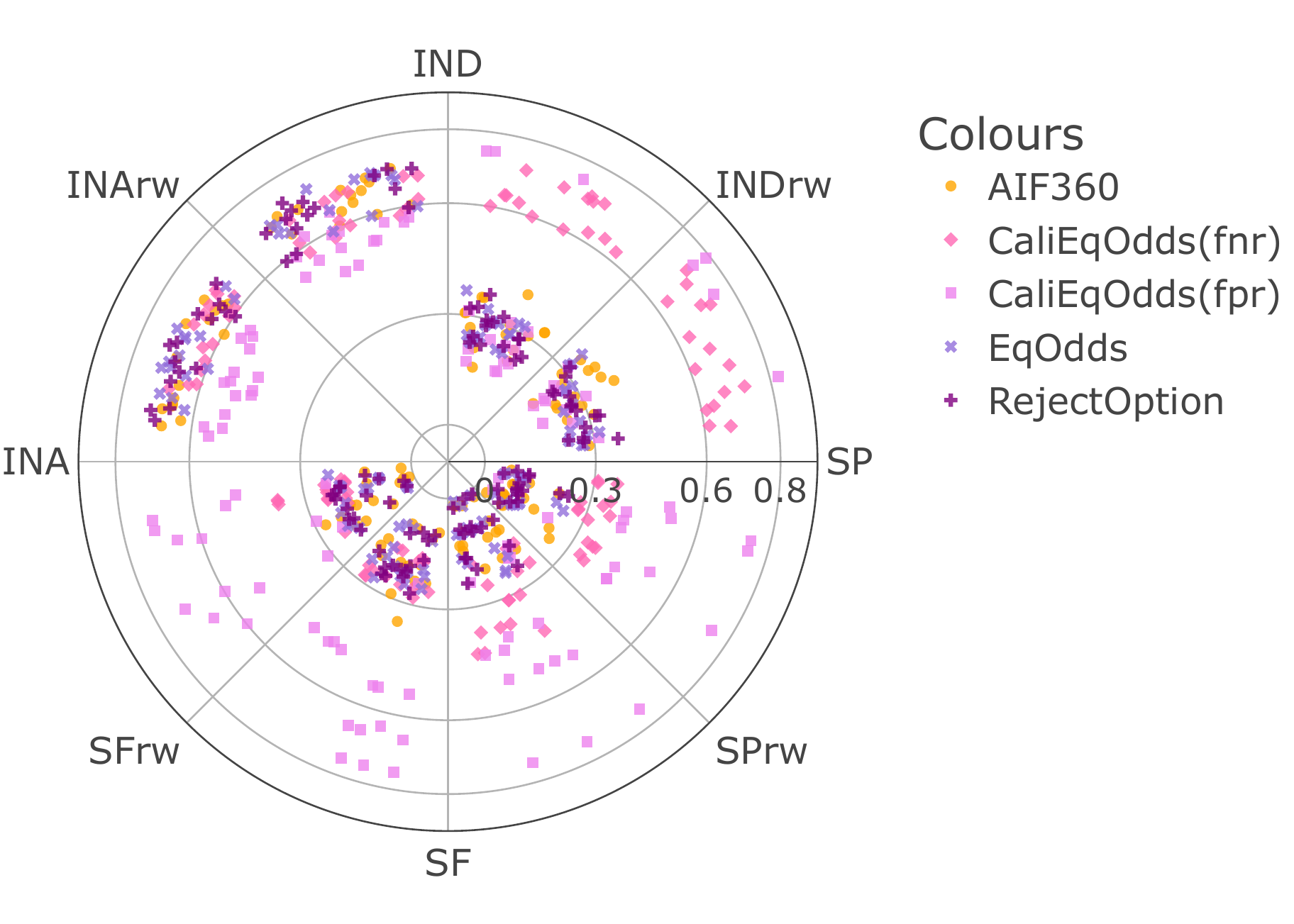}
\caption{
The state of the art in post-processing for improving fairness. 
Five post-processing methods implemented in AI Fairness 360 toolkit, i.e., ``AIF360'' (yellow), ``CaliEqOdds(fnr)'' (pink), ``CaliEqOdds(fpr)'' (violet), ``EqOdds'' (light purple), and ``RejectOption'' (purple), are run in five trials per each of three uni-race thresholds $x=[47, 53, 60]$ ($5\times 5\times 3$ runs).
Left: The mean values of eight indices on eight angular axes. 
Right: 
Each sector is labelled with the index immediately counter-clockwise to it.
Dots in one sector denote the values of the corresponding index of all experimental runs.}
\label{fig:AIF360}
\end{figure*}

\begin{figure*}[tbp]
\centering
\begin{tabular}{p{0.4\textwidth} p{0.6\textwidth}} \vspace{0pt}\includegraphics[width=0.39\textwidth]{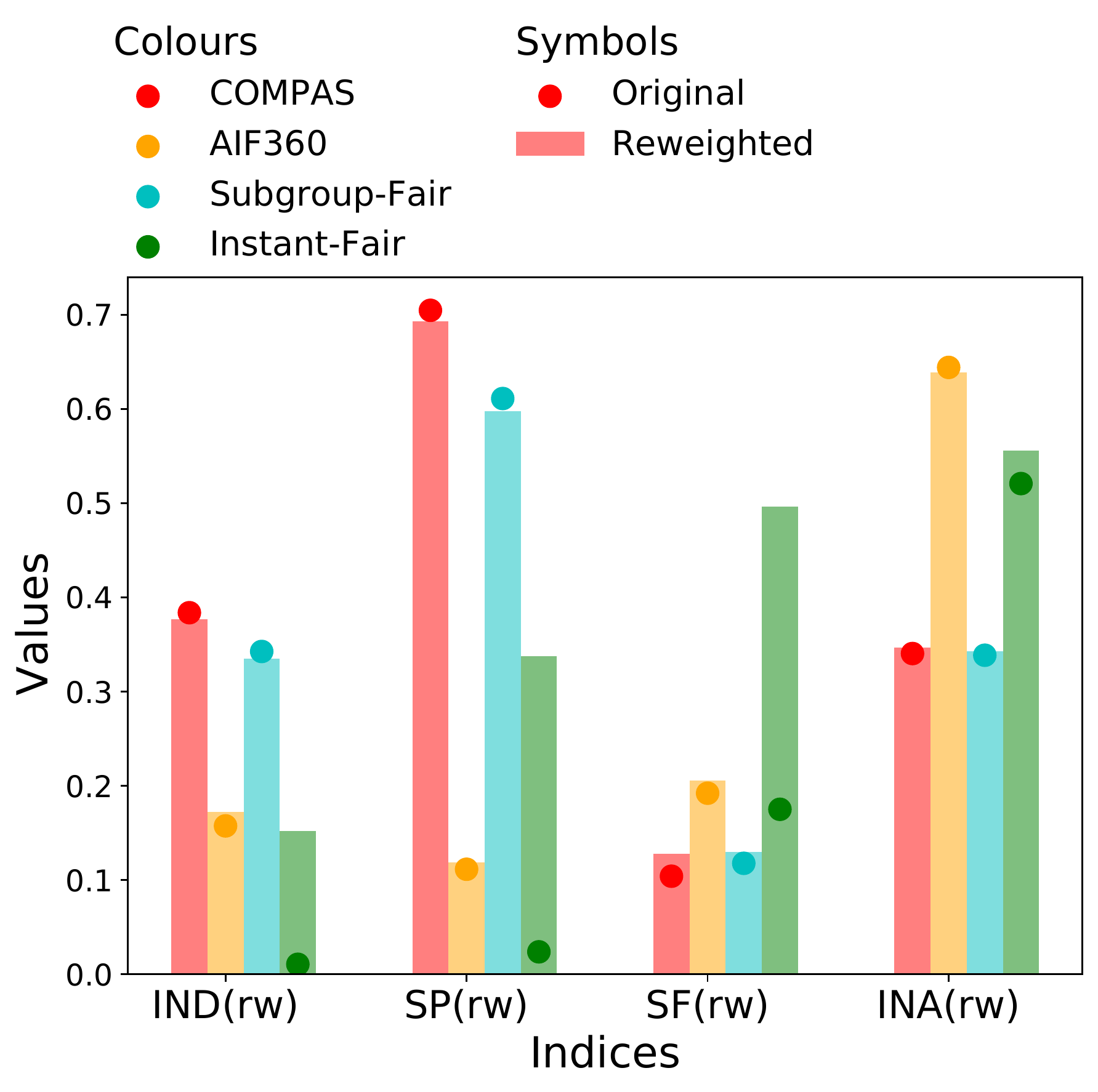} & \vspace{0pt}\includegraphics[width=0.6\textwidth]{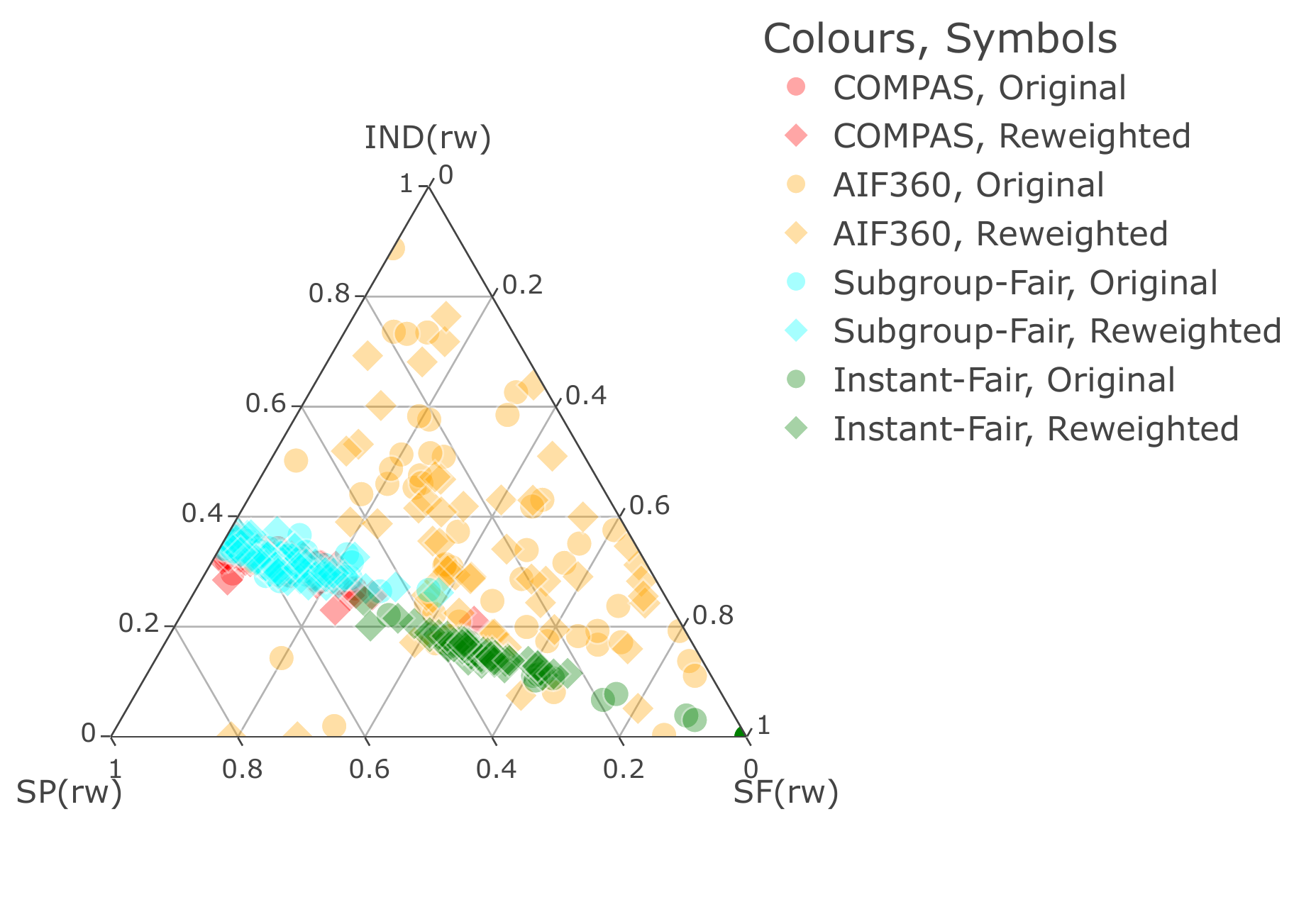} 
\end{tabular}
\caption{The effects of our methods, when applied as post-processing. Three post-processing methods ``subgroup-fair'' (blue), ``instant-fair'' (green), ``AIF360'' (yellow) are run in 50 trials, and their outputs are compared with the COMPAS scores (red) with race-wise thresholds being base rates.
Left: The mean values of original (dots) and re-weighted (bars) indices associated with outputs from four models.
Right: The original (dots) and re-weighted (squares) fairness metrics (i.e., INDrw, SPrw, SFrw) of all experimental runs.}
\label{fig:postprocess}
\end{figure*}

\begin{figure*}[tbp]
\centering
\includegraphics[width=0.8\textwidth]{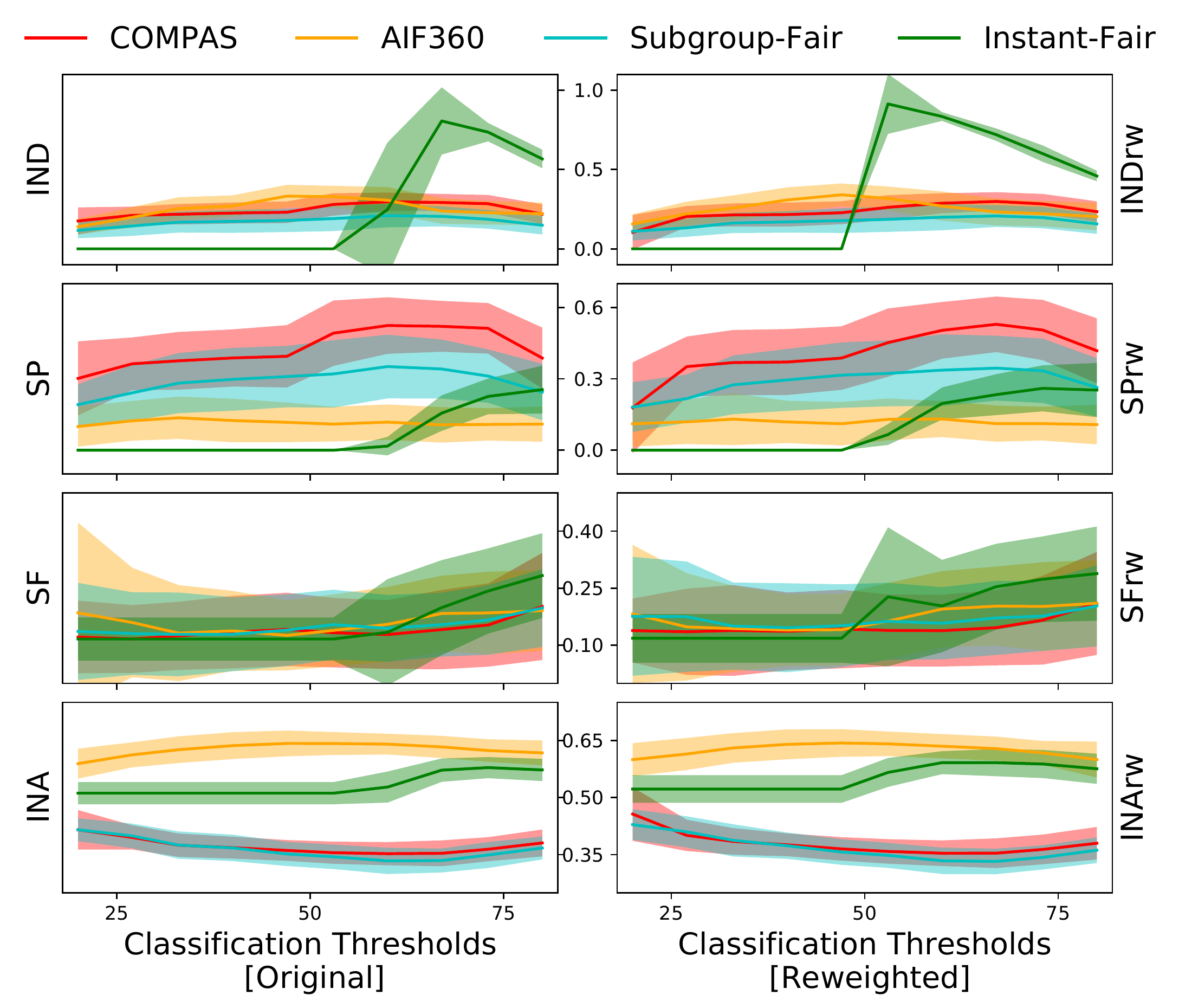}
\caption{Three post-processing methods ``subgroup-fair'' (blue), ``instant-fair'' (green), ``AIF360'' (yellow) are run in 50 trials, and their outputs are compared with the COMPAS scores (red), with 10 uni-race thresholds $x=[20,26.7,\dots,80]$.
Each subplot displays the mean (curves) and mean $\pm$ one standard deviations (shaded error bands) of the corresponding index across the 50 trials, against 10 uni-race thresholds.
}
\label{fig:thresholds}
\end{figure*}


\clearpage
\section{Conclusions}
We have introduced the two natural notions of fairness in forecasting. 
When the corresponding optimisation problems are solved to global optimality, the solutions outperform the COMPAS system in terms of independence and separation indices IND(rw) and SP(rw), cf. Equation~\eqref{equ:indices}.

As a further technical contribution, we have presented globally convergent methods for solving the optimisation problems arising from the two notions of fairness using hierarchies of convexifications of non-commutative polynomial optimisation problems. Also, we have shown that the run-time of standard solvers for the convexifications is independent of the dimension of the hidden state. This provides a technical tool in machine learning and statistics that is of independent interest, and that can also be applied in other settings.


\acks{Quan Zhou's and Robert Shorten's work has been supported by the Science Foundation Ireland under Grant 16/IA/4610. Jakub Mare\v{c}ek acknowledges support of the OP RDE funded project CZ.02.1.01/0.0/0.0/16\_019/0000765 ``Research Center for Informatics''.
This work has received funding from the European Union’s Horizon Europe research and innovation programme under grant agreement No. 101070568. This work was also supported by Innovate UK under the Horizon Europe Guarantee; UKRI Reference Number: 10040569 (Human-Compatible Artificial Intelligence with Guarantees (AutoFair)).
}

\appendix
\section{Motivation}
\subsection{Insurance Pricing}
\label{sec:insurancepricing}

Let us consider two motivating examples.
One important application arises in Actuarial Science. In the European Union, a directive (implementing the principle of equal treatment between men and women in the access to and supply of goods and services), bars insurers from using gender as a factor in justifying differences in individuals' premiums. 
In contrast, insurers in many other territories classify insureds by gender, because females and males have different behaviour patterns, which affects insurance payments. Take the annuity-benefit scheme for example. It is a well-known fact that females have a longer life expectancy than males \cite{huang2020effect}. The insurer will hence pay more to a female insured during her lifetime, compared to a male insured, on average \cite{thiery2006fairness}. Because of the directive, a unisex mortality table needs to be used. As a result, male insureds receive less benefits, while paying the same premium in total as the female subgroup \cite{thiery2006fairness}. Consequently, male insureds might leave the annuity-benefit scheme (known as adverse selection), which makes the unisex mortality table more challenging to use in the estimation of the life expectancy of the ``unisex'' population, where female insureds become the advantaged subgroup.

Consider a simple actuarial pricing model of annuity insurance. Insureds enter an annuity-benefit scheme at time $0$ and each insured can receive 1 euro at the end of each year for at most 10 years on the condition that it is still alive. Let $p_t$ denotes how many insureds left in the scheme in the end of the $t^{th}$ year. Suppose there are $p_0$ insureds in the beginning and the pricing interest rate is $i$ $(i\leq 1)$. The formula of calculating the pure premium is in Equation~\eqref{equ:premium}, thus summing up the present values of payment in each year and then divided by the number of insureds in the beginning.

\begin{equation*}
    \premium := \frac{\sum_{t=1}^{10} p_t \times (1+i)^{-t} }{p_0} \label{equ:premium}
\end{equation*}

The most important quality $p_t$ is derived from estimating insureds' life expectancy. 
Suppose the insureds can be divided into female and male subgroups. Each subgroup has one trajectory: $\{Y_t\}^{(\ \cdot \ ,f)}$ for female subgroup, $\{Y_t\}^{(\ \cdot \ ,m)}$ for male subgroup for $1\leq t\leq 10$, where the superscript $i$ is dropped. The two trajectories indicate how many female and male insureds are alive at the end of the $t^{th}$ year, respectively. Both trajectories can be regarded as linear dynamic systems. We have

\begin{eqnarray*}
    Y_t^{(\ \cdot \ ,f)} =& G^{(f)} Y_{t-1}^{(\ \cdot \ ,f)} + \omega_t^{(f)}, & 2\leq t\leq 10, 
    \\
    Y_t^{(\ \cdot \ ,m)} =& G^{(m)} Y_{t-1}^{(\ \cdot \ ,m)} + \omega_t^{(m)}, & 2\leq t\leq 10, %
    \\
    p_t =& Y_t^{(\ \cdot \ ,f)}+Y_t^{(\ \cdot \ ,m)}, & 1\leq t\leq 10, 
\end{eqnarray*}

where $\omega_t^{(f)}$ and $\omega_t^{(m)}$ are measurement noises while $G^{(f)}$ and $G^{(m)}$ are system matrices for female LDS $\mathcal{L}^{(f)}$ and male LDS $\mathcal{L}^{(m)}$ respectively. Note that these are state processes, without any observation process: the number of survivals can be precisely observed. 
To satisfy the directive, one needs to consider a unisex model:

\begin{eqnarray*}
    f_t &=& G f_{t-1} + \omega_t, \, 2\leq t\leq 10,\label{equ:annuity-female+male-LDS}
\end{eqnarray*}

where $2\leq t\leq 10$ and $\omega_t$ and $G$ pertain to the unisex insureds LDS $\mathcal{L}$. Subsequently, the loss functions for female (f) and male (m) subgroups are:

\begin{eqnarray*}
\loss^{(\ \cdot \ ,f)}(f_t) :=& ||Y_t^{(\ \cdot \ ,f)}-f_t|| &,1\leq t\leq 10, 
\\
\loss^{(\ \cdot \ ,m)}(f_t) :=& ||Y_t^{(\ \cdot \ ,m)}-f_t|| &,1\leq t\leq 10, 
\end{eqnarray*}

Since the trajectories $\{Y_t\}^{(\ \cdot \ ,f)}$ and $\{Y_t\}^{(\ \cdot \ ,m)}$ have the same length and there is only one trajectory in each subgroup, the two objective Eq.(2)-Eq.(3)has the form:

\begin{equation*}
        \min_{f_t,1\leq t\leq 10} \max \left \{ 
        \sum_{t=1}^{10} \loss^{(\ \cdot \ ,f)}(f_t),
        \sum_{t=1}^{10} \loss^{(\ \cdot \ ,m)}(f_t) \right \}
\end{equation*}
\begin{equation*}
        \min_{f_t,1\leq t\leq 10} \left \{
        \max_{1\leq t\leq 10, s\in\{f,m\}} \left \{
        \loss^{(\ \cdot \ ,s)}(f_t) \right \} \right \}
\end{equation*}

\subsection{Personalised Pricing}
\label{sec:Personalised Pricing}
Another application arises in personalised pricing (PP). 
For example, Amazon has been found \cite{OECDbackground} to sell certain products to regular consumers at higher prices. This is legal, albeit questionable.
In contrast, gender-based price discrimination 
\cite{abdou2019gender}
 violates \cite{OECDbackground}  anti-discrimination laws in many jurisdictions. 

Let us consider an idealised example of PP: Consider a soap retailer, whose customers contain female and male subgroups. Each gender has a specific dynamic system modelling its willing to pay (``demand price'' of each subgroup), while the retailer should set a ``unisex'' price. As in the discussion of insurance pricing, we consider subgroups $S=\{$female, male$\}$ and use superscripts $(f),(m)$ to distinguish the related quantities. Unlike in insurance pricing, the demand price of each customer is regarded as a single trajectory. More importantly, since customers might start buying soap, quit buying the soap, or move to other substitutes at different time points, those trajectories of demand prices are assumed to be of varying lengths. For example, a customer starts to buy the soap at time $3$ but decides to buy hand wash instead from time $7$. 

Let us assume there are $\lvert\mathcal{I}^{(f)}\rvert$ female customers and $\lvert\mathcal{I}^{(m)}\rvert$ customers in the overall time window $\mathcal{T}^+$.
Let $Y_t^{(i,s)}$ denote the estimated demand price at time $t$ of the $i^{th}$ customer in subgroup $s$. These evolve as:

\begin{eqnarray*}
    \phi_{t}^{f} =& G^{(f)} \phi_{t-1}^{(f)} + \omega_t^{(f)} &, t\in\mathcal{T}^+, 
    \\
    Y_{t}^{(i,f)} =& F^{(f)'} \phi_{t}^{(f)} + \nu_t^{(i,f)} &, t\in\mathcal{T}^{(i,f)}, i\in\mathcal{I}^{(f)}, 
    \\
    \phi_{t}^{m} =& G^{(m)} \phi_{t-1}^{(m)} + \omega_t^{(m)} &, t\in\mathcal{T}^+, 
    \\
    Y_{t}^{(i,m)} =& F^{(m)'} \phi_{t}^{(m)}+ \nu_t^{(i,m)} &,t\in\mathcal{T}^{(i,m)}, i\in\mathcal{I}^{(m)}.
\end{eqnarray*}

The unisex model for demand price considers the unisex state $m_t$, the unisex system matrices $G,F$, and unisex noises $\omega_t,\nu_t$:
\begin{eqnarray*}
    m_t =& G m_{t-1} + \omega_t &, \, t\in\mathcal{T}^+, 
    \\
    f_t =& F' m_{t} + \nu_t &, \, t\in\mathcal{T}^+. 
\end{eqnarray*}
For $\loss^{(i,f)}(f_t) := ||Y_t^{(i,f)}-f_t|| ,t\in\mathcal{T}^{(i,f)}, i\in\mathcal{I}^{(f)}$ and $\loss^{(i,m)}(f_t) := ||Y_t^{(i,m)}-f_t||,t\in\mathcal{T}^{(i,m)}, i\in\mathcal{I}^{(m)}$, the two objectives
Eq.(2)-Eq.(3) have the form:

\begin{equation*}
        \min_{f_t,t\in\mathcal{T}^+} 
        \max_{s\in\mathcal{S}} \left \{
        \frac{1}{\lvert\mathcal{I}^{(s)}\rvert}\sum_{i=1}^{\mathcal{I}^{(s)}} \frac{1}{\lvert\mathcal{T}^{(i,s)}\rvert}\sum_{t\in\mathcal{T}^{(i,s)}} \loss^{(i,s)}(f_t) \right \}
\end{equation*}

\begin{equation*}
        \min_{f_t,t\in\mathcal{T}^+} \left \{ \max_{t\in\mathcal{T}^{(i,s)},i\in\mathcal{I}_s,s\in\mathcal{S}} \left \{ \loss^{(i,s)}(f_t) \right \} \right \}
\end{equation*}

We also refer to \cite{dong2020protecting} for further work on protecting customers' interests in personalised pricing via fairness considerations.

\section{Background}
\label{sec:background}
In this paper, we consider the case of multiple variants of the LDS and conduct proper learning of the LDS in a way of fairness using the technologies of non-commutative polynomial optimisation.
In Section~\ref{sec:background}, we firstly set our work in the context of system identification and control theory.
Secondly, we introduce the concept of fairness, which can be used to deal with multiple variants of the LDS.
In the end of this section, we provide a brief overview of 
non-commutative polynomial optimisation, pioneered by 
\cite{Pironio2010} and nicely surveyed by 
\cite{burgdorf2016optimization}, which is our key technical tool.
\subsection{Related Work in System Identification and Control}
\label{sec:relwork}
Research within System Identification variously appears in venues associated with Control Theory, Statistics, and Machine learning. We refer to \cite{Ljung1999} and \cite{tangirala2014principles} for excellent overviews of the long history of research in the field, going back at least to \cite{ASTROM1965}. In this section, we focus on pointers to key more recent publications. 
In improper learning of LDS, a considerable progress has been made in the analysis of predictions for the expectation of the next measurement using auto-regressive (AR) processes. In \cite{anava13}, first guarantees were presented for auto-regressive moving-average (ARMA) processes. In \cite{liu2016online}, these results were extended to a subset of autoregressive integrated moving average (liu2016online) processes.  \cite{Jakub} have shown that up to an arbitrarily small error given in advance, AR($s$) will perform as well as \emph{any} Kalman filter on any bounded sequence. 
This has been extended by \cite{tsiamis2020online} to Kalman filtering with logarithmic regret.
Another stream of work within improper learning focuses on sub-space methods \cite{katayama2006subspace,van2012subspace} and spectral methods. 
\cite{tsiamis2019sample,tsiamis2019finite} presented the present-best guarantees for traditional sub-space methods.
Within spectral methods, \cite{hazan2017learning} and \cite{hazan2018spectral} have considered learning LDS with input, employing certain eigenvalue-decay estimates of Hankel matrices in the analyses of an auto-regressive process in a 
dimension increasing over time.
We stress that none of these approaches to improper learning  are ``prediction-error'': They do \emph{not} estimate the system matrices.

In proper learning of LDS, many state-of-the-art approaches consider the least-squares method, despite complications encountered in unstable systems \cite{faradonbeh2018finite}. \cite{simchowitz2018learning} have  
provided non-trivial guarantees for the ordinary least-squares (OLS) estimator 
in the case of stable $G$ and there being no hidden component, i.e., $F'$ being an identity and $Y_t = \phi_t$. 
Surprisingly, they have also shown that more unstable linear systems are easier to estimate than less unstable ones, in some sense. 
\cite{simchowitz2019learning} extended the results to allow for a certain pre-filtering procedure.
\cite{SarkarRakhlin} extended the results to cover stable,
marginally stable, and explosive regimes.

Our work could be seen as a continuation of the least squares method to processes with hidden components, with guarantees of global convergence.
In Computer Science, our work could be seen as an approximation scheme \cite{vazirani2013approximation},
as it allows for $\epsilon$ error for any $\epsilon > 0$.



\subsection{Learning from Imbalanced Data}

Traditional machines learning algorithms can be biased towards majority class over-prevalence \cite{chawla2003c4}, i.e., the under-representation bias \cite{blum2019recovering}. Also, the cost of mis-classifying an abnormal event (minority class) as a normal event (majority class) is often relatively high \cite{chawla2008automatically,chawla2002smote}. For example, in the case of fraud, diseases, those cases are rare but able to cause serious damages, so it is of great interest to research. The benchmark of learning from imbalanced data was pioneered by \cite{chawla2002smote}. They proposed the Synthetic Minority Over-sampling Technique (SMOTE), such that a combination of over-sampling the minority class and under-sampling the majority class can efficiently improve the classifier performance.

The research in learning from imbalanced data has been extensively studied with a particular focus on classification and other predictive contexts as many real-world applications are already facing this problem \cite{torgo2017learning,fernandez2018smote}. SMOTE has been successfully extended to a variety of applications because of its simplicity and robustness \cite{cieslak2008learning}. Surprisingly, \cite{chawla2008automatically} provides an algorithm that automatically discovers the amount of re-sampling. 
One the other hand, \cite{moniz2017resampling} proposed the concept of temporal and relevance bias in extension of re-sampling strategies. For the clear journey of SMOTE, please refer to \cite{fernandez2018smote}.

Unlike the common solution of re-sampling, we address the under-representation bias from the view of optimisation, such that the ``loss'', or other statistical performance is equalised over majority and minority subgroups.

\subsection{Non-Commutative Polynomial Optimisation}
\label{sec:ncpop}

In learning of the LDS, the key technical tool of this paper is non-commutative polynomial optimisation (NCPOP),
first introduced by \cite{Pironio2010}. 
Here, we provide a brief summary of their results, and refer to \cite{burgdorf2016optimization} for a book-length introduction.
NCPOP is an operator-valued optimisation problem with a standard form in Equation~\eqref{NCPO}:
\begin{mini}
{(\mathcal{H},\mathbf{X},\phi)} {\langle\phi,p(\mathbf{X})\phi\rangle}{P:}{p*=}
\addConstraint{q_i(\mathbf{X})}{\succcurlyeq 0, }{i=1,\ldots,m}
\addConstraint{\langle\phi,\phi\rangle}{= 1,}{}\label{NCPO}
\end{mini}
where $\mathbf{X}=(X_1,\dots,X_n)$ is a $n$-tuple of bounded operators on a Hilbert space $\mathcal{H}$ in this section.
The normalised vector $\phi$, i.e., $\|\phi \|^2=1$ is also defined on $\mathcal{H}$ with inner product $\langle\phi,\phi\rangle$ equals to $1$. 
$p(\mathbf{X})$ and $q_i(\mathbf{X})$ are polynomials 
and $q_i(\mathbf{X})\succcurlyeq 0$ denotes that the operator $q_i(\mathbf{X})$ is positive semi-definite. 

In contrast to traditional scalar-valued, vector-valued, or matrix-valued optimisation techniques, the dimension of operators $\mathbf{X}$ is unknown \textit{a priori}.
Let $[\mathbf{X},\mathbf{X}^{\dag}]$ denotes these $2n$ operators, with the $\dag$-algebra being conjugate transpose.
Monomials $\omega,\mu$ and $\nu$ in following text are products of powers of variables from $[\mathbf{X},\mathbf{X}^{\dag}]$.
The degree of a monomial, denoted by $|\omega|$, refers to the sum of the exponents of all operators in the monomial $\omega$.
Let $\mathcal{W}_k$ denote the collection of all monomials whose degrees $|\omega|\leq k$.
Polynomials $p(\mathbf{X})$ and $q_i(\mathbf{X})$ of degrees $\deg(p)$ and $\deg(q_i)$, respectively,
can be written as:

\begin{equation*}
p(\mathbf{X})=\sum_{\lvert\omega\rvert\leq \deg(p)} p_{\omega} \omega,\quad
q_i(\mathbf{X}) = \sum_{\lvert\mu\rvert\leq \deg(q_i)} q_{i,\mu} \mu, 
\label{LCoP}
\end{equation*}
where $i = 1,\ldots,m$. 
Following \cite{akhiezer1962some}, we can define the moments on field $\mathbb{R}$ or $\mathbb{C}$, with a feasible solution $(\mathcal{H},\mathbf{X},\phi)$ of problem in Equation~\eqref{NCPO}:
\begin{equation*}
y_{\omega} = \langle \phi, \omega \phi \rangle, \label{DEFoMOMENT}
\end{equation*}
for all $\omega \in \mathcal{W}_{\infty}$ and  $y_1=\langle \phi,\phi \rangle=1$.
Given a degree $k$, the moments whose degrees are less or equal to $k$ form a sequence of $y=(y_{\omega})_{\lvert\omega\rvert \leq 2k}$.
With a finite set of moments $y$ of degree $k$, we can define a corresponding $k^{th}$ order moment matrix $M_k(y)$:
\begin{equation*}
M_k(y)(\nu,\omega) = y_{\nu^{\dag}\omega} = \langle \phi, \nu^{\dag}\omega \phi \rangle,
\label{equ:moment-matrix}
\end{equation*}
for any $ \lvert\nu\rvert,\lvert\omega\rvert \leq k$ and a localising matrix $M_{k-d_i}(q_i y)$: 
\begin{align*}
M_{k-d_i}(q_iy)(\nu,\omega) & = \sum_{\lvert\mu\rvert \leq \deg(q_i)} q_{i,\mu} y_{\nu^{\dag}\mu\omega} \\ & = \sum_{\lvert\mu\rvert \leq \deg(q_i)} q_{i,\mu} \langle \phi, \nu^{\dag} \mu\omega \phi \rangle, \notag
\end{align*}
for any $\lvert\nu\rvert,\lvert\omega\rvert \leq k-d_i$, where $d_i=\lceil \deg(q_i)/2\rceil$. The upper bounds of $\lvert\nu\rvert$ and $\lvert\omega\rvert$ are lower than that of moment matrix because $y_{\nu^{\dag}\mu \omega}$ is only defined on $\nu^{\dag}\mu\omega \in \mathcal{W}_{2k}$ while $\mu\in \mathcal{W}_{\deg(q_i)}$.

If $(\mathcal{H},\mathbf{X},\phi)$ is feasible, 
one can utilise the Sums-of-Squares theorem of \cite{helton2002positive} and \cite{mccullough2001factorization} to derive semidefinite programming (SDP) relaxations.
In particular, we can obtain a $k^{th}$ order SDP relaxation of the non-commutative polynomial optimisation problem in Equation~\eqref{NCPO} by choosing a degree $k$ that satisfies the condition of $2k\geq \max\{\deg(p),\max_i \deg(q_i)\}$.
The SDP relaxation of order $k$, which we denote $R_k$, has the form:
\begin{mini}
{ y=(y_{\omega})_{\lvert\omega\rvert\leq 2k} }{\sum_{\lvert\omega\rvert\leq d} p_{\omega} y_{\omega}}{R_k:}{p^k=}
\addConstraint{M_k(y)}{\succcurlyeq 0}{}
\addConstraint{M_{k-d_i}(q_iy)}{\succcurlyeq 0, }{i=1,\ldots,m}
\addConstraint{\langle\phi,\phi\rangle}{= 1.}{}
\end{mini}

Let us define the quadratic module, following \cite{Pironio2010}.  Let $Q=\{q_i,i=1,\dots,m\}$ be the set of polynomials determining the constraints. 
The \emph{positivity domain} $\mathbf{S}_Q$ of $Q$ are $n$-tuples of bounded operators $\mathbf{X}$ on a Hilbert space $\mathcal{H}$ making all $q_i(\mathbf{X})$ positive semidefinite.
The \emph{quadratic module} $\mathbf{M}_Q$ is the set of $\sum_if_i^{\dag}f_i+\sum_i\sum_j g_{ij}^{\dag}q_ig_{ij}$ 
where $f_i$ and $g_{ij}$ are polynomials in the $2n$ operators in $[\mathbf{X},\mathbf{X}^{\dag}]$. 
As in \cite{Pironio2010}, if the Archimedean assumption is satisfied, $\lim_{k \to \infty} p^k=p^*$ for a finite $k$.

\subsection{The Formal Statement}

In order to utilise subgroup fairness or instantaneous fairness, one needs to be able to guarantee that global optima for the corresponding optimisation problems can be found. Such guarantees are non-trivial in the case of non-convex, non-commutative optimisation problems.  
Following \cite{zhou2020proper}, one can formalise the guarantees we provide:

\begin{assume}[Archimedean]
\label{Archimedean}
Quadratic module $\mathbf{M}_Q$ of Eq.(5) or Eq.(6) is Archimedean, i.e., there exists a real constant $C$ such that $C^2-(X_1^{\dag}X_1+\cdots+X_{2n}^{\dag}X_{2n})\in \mathbf{M}_Q$,
for these $2n$ operators in $[\mathbf{X},\mathbf{X}^{\dag}]$. 
\end{assume}

Our main result shows that it is possible to recover the quadruple $(G,F,V,W)$ of the subgroup-blind $\mathcal{L}$ with guarantees of global convergence:

\begin{thm}
\label{T:covergence}
For any observable linear system $\mathcal{L}=(G,F,V,W)$, 
for any length $\mathcal{T}^+$ of a time window, 
and any error $\epsilon > 0$, 
under Assumption~\ref{Archimedean},
there is a convex optimisation problem from whose solution one can extract the best possible estimate of system matrices of a system $\mathcal{L}$ based on the observations, with fairness subgroup-fair considerations Eq.(5), up to an error of at most $\epsilon$ in Frobenius norm. Furthermore, with suitably modified assumptions, the result holds also for the instant-fair considerations Eq.(6).
\end{thm}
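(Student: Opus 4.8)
The plan is to recast the Subgroup-Fair problem~\eqref{min:FairA} as a non-commutative polynomial optimisation problem (NCPOP) in the standard form of Equation~\eqref{NCPO}, and then to invoke the convergence of the Navascu\'es--Pironio--Ac\'in hierarchy of SDP relaxations $R_k$ as the promised family of convexifications. First I would collect the operator-valued decision variables $\mathcal{O}$ (the operators $F,G$, the vectors $m_t,\omega_t$, and the scalars $f_t,\nu_t,z$) into a single tuple $\mathbf{X}$ of bounded operators. The state-evolution constraints $m_t = G m_{t-1}+\omega_t$ and the observation constraints $f_t = F' m_t + \nu_t$ are polynomial (indeed bilinear) identities in $\mathbf{X}$; each equality is encoded as the pair of inequalities $q\succcurlyeq 0$, $-q\succcurlyeq 0$ admitted by~\eqref{NCPO}. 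The objective $z+\lambda_1\sum_{t\ge1}\omega_t^2+\lambda_2\sum_{t\ge1}\nu_t^2$ is already a non-commutative polynomial. The only non-polynomial ingredient is the loss $\loss^{(i,s)}(f_t)=\|Y^{(i,s)}_t-f_t\|$ in the fairness constraint, which I would linearise by introducing, for each triple $(t,i,s)$, an epigraph scalar $\ell^{(i,s)}_t$ together with $\ell^{(i,s)}_t \succcurlyeq Y^{(i,s)}_t-f_t$ and $\ell^{(i,s)}_t \succcurlyeq f_t-Y^{(i,s)}_t$, so that the subgroup-fairness inequality becomes the single polynomial constraint $z \succcurlyeq \frac{1}{\lvert\mathcal{I}^{(s)}\rvert}\sum_{i}\frac{1}{\lvert\mathcal{T}^{(i,s)}\rvert}\sum_t \ell^{(i,s)}_t$. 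After this reduction the whole problem is an instance of~\eqref{NCPO}.

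With the problem in standard form, the second step is to apply the relaxation machinery recalled in Section~\ref{sec:ncpop}. Under Assumption~\ref{Archimedean}, the quadratic module $\mathbf{M}_Q$ generated by the constraints is Archimedean, so the Sums-of-Squares theorems of \cite{helton2002positive,mccullough2001factorization} apply and the hierarchy $R_k$ satisfies $\lim_{k\to\infty} p^k = p^*$, where $p^*$ is the optimal value of~\eqref{min:FairA}. Consequently, for any prescribed $\epsilon>0$ there exists a finite order $k$ with $\lvert p^k-p^*\rvert<\epsilon$; this $R_k$ is a convex semidefinite programme and is exactly the convexification claimed by the theorem. I would stress that the Archimedean hypothesis is precisely where the stability of the LDS enters: it supplies a uniform operator-norm bound $C$ and hence the boundedness of all estimated states, observations, and system matrices needed for the hierarchy to converge.

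The third step is extraction and error control. From the optimal moment matrix $M_k(y)$ of $R_k$ I would read off the first-order moments $y_\omega=\langle\phi,\omega\phi\rangle$ corresponding to the entries of $G$ and $F$, obtaining estimates $\hat{G},\hat{F}$, and recover $\hat{V},\hat{W}$ from the second moments of the noise variables $\nu_t,\omega_t$. The observability hypothesis on $\mathcal{L}$ is what makes this extraction meaningful: it guarantees that the map from $(G,F,V,W)$ to the induced observations is identifiable, so that an $\epsilon$-gap in objective value translates into a controlled gap in the recovered matrices, which I would then bound in Frobenius norm. Finally, the Instant-Fair problem~\eqref{min:FairB} is handled by the identical argument: it differs from~\eqref{min:FairA} only in replacing the subgroup-averaged inequality by the family $z\succcurlyeq \ell^{(i,s)}_t$ over all $(t,i,s)$, which is again polynomial, so the same epigraph reduction places it in the form~\eqref{NCPO} and convergence of $R_k$ yields the result under the correspondingly modified Archimedean assumption.

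I expect the main obstacle to be the quantitative passage from global optimality of the scalar objective value to the Frobenius-norm error on $(G,F,V,W)$: the NPA hierarchy controls only the optimum $p^k$, and converting this into a bound on $\|\hat{G}-G\|_F$ and $\|\hat{F}-F\|_F$ requires combining observability with a stable-inversion (identifiability) estimate, with extra care because the operator dimension is not fixed \emph{a priori}. A secondary subtlety is verifying that the epigraph linearisation of the norm does not enlarge the feasible set in a way that breaks the Archimedean property, so that Assumption~\ref{Archimedean} genuinely applies to the reformulated constraint set.
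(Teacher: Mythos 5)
Your proposal follows essentially the same route as the paper: reformulate the fairness-constrained problem as an NCPOP and invoke convergence of the NPA hierarchy of SDP relaxations under the Archimedean assumption (Theorem 1 of Pironio et al.), then extract the minimiser from the optimal moment data --- the paper does this via the rank-loop condition with Gram decomposition or the GNS construction, while you read off low-order moments, a minor difference of emphasis. The quantitative gap you honestly flag, namely passing from an $\epsilon$-optimal objective value to a Frobenius-norm bound on $(G,F,V,W)$ via observability, is likewise left unaddressed in the paper's own proof, so your account is, if anything, more explicit about the reformulation and its remaining subtleties.
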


\begin{proof}
First, we need to show the existence of a sequence 
of convex optimisation problems, whose objective function approaches the optimum of the non-commutative polynomial optimisation problem.
As explained in the subsection above, 
\cite{Pironio2010} shows that, indeed, there are natural semidefinite programming problems, which satisfy this property.
In particular, the existence and convergence of the sequence is shown by Theorem 1 of \cite{Pironio2010}, which requires Assumption~\ref{Archimedean}.

Notice that we can use the so-called rank-loop condition of \cite{Pironio2010} to detect global optimality.
Once optimality is detected, it is possible to extract the global optimum $(\mathcal{H}^*, \mathbf{X}^*, \phi^*)$ from the optimal solution $y$ of problem $R_k$,
by Gram decomposition; cf. Theorem 2 in \cite{Pironio2010}.
Simpler procedures for the extraction have been considered, cf. \cite{henrion2005detecting}, but remain less well understood. 

More broadly, we would like to show the extraction of the minimiser from the SDP relaxation of order $k(\epsilon)$ in the series is possible. 
There, one utilises the Gelfand--Naimark--Segal (GNS) construction \cite{gelfand1943imbedding,segal1947irreducible}, as explained in Section 2.2 of  \cite{klep2018minimizer},
which does not require the  rank-loop condition to be satisfied,
We refer to Section 2.2 of  \cite{klep2018minimizer} and Section 2.6 of  \cite{dixmier1969algebres} for details.
\end{proof}

In summary, Theorem~\ref{T:covergence} makes it possible to recover the quadruple $(G,F,V,W)$ of the subgroup-blind $\mathcal{L}$ using the technologies of NCPOP with guarantees of global convergence \cite{Pironio2010}. 
To do so, we need to have the non-commutative versions of those formulations firstly, thus introduce a a Hilbert space $\mathcal{H}$ with all operators and a normalised vector defined on $\mathcal{H}$. 

\section{The COMPAS dataset}
\label{sec:COMPAS dataset}
COMPAS 
(Correctional Offender Management Profiling for Alternative Sanctions) is a popular commercial algorithm used by judges and parole officers for scoring criminal defendant’s likelihood of re-offending (recidivism). It has been shown that the algorithm is biased in favour of white defendants, based on a 2 year follow-up study (i.e., who actually committed crimes or violent crimes after 2 years).

Downloaded from \url{https://github.com/propublica/compas-analysis}, this dataset is what \cite{angwin2016machine} used in analysing the racial bias in COMPAS recidivism scores.
The COMPAS dataset comprises of defendants' gender, race, age, charge degree, COMPAS recidivism scores, two-year recidivism label, as well as information on prior incidents. 
The COMPAS recidivism scores, ranging from 1 to 10, are positively related to the estimated likelihood of recidivism, given by the COMPAS system.
The two-year recidivism label denotes whether a person actually got rearrested within two years (label 1) or not (label 0). If the two-year recidivism label is $1$, there is also information concerning the recharge degree and the number of days until the person gets rearrested.
The dataset also consists of information on
'Days before Re-offending', which is the date difference between the defendant's crime offend date and recharge offend date. It could be negatively correlated to the defendant's actual risk level while the COMPAS recidivism scores would be the estimated risk level.

\vskip 0.2in
\bibliography{ref}
\bibliographystyle{theapa}

\end{document}